\newtheorem{thm}{Theorem}
\newtheorem{lem}{Lemma}
\newenvironment{customthm}[1]
  {\innercustomthm}
  {\endinnercustomthm}
\title{Implicit Maximum Likelihood Estimation}
\author{
Ke Li \qquad Jitendra Malik \\
Department of Electrical Engineering and Computer Sciences\\
University of California, Berkeley\\
Berkeley, CA 94720\\
United States\\
\texttt{\{ke.li,malik\}@eecs.berkeley.edu}
}
\begin{document}
% \nipsfinalcopy is no longer used

\maketitle

\begin{abstract}
Implicit probabilistic models are models defined naturally in terms of a sampling procedure and often induces a likelihood function that cannot be expressed explicitly. We develop a simple method for estimating parameters in implicit models that does not require knowledge of the form of the likelihood function or any derived quantities, but can be shown to be equivalent to maximizing likelihood under some conditions. Our result holds in the non-asymptotic parametric setting, where both the capacity of the model and the number of data examples are finite. We also demonstrate encouraging experimental results.  
\end{abstract}

\section{Introduction}

Generative modelling is a cornerstone of machine learning and has received increasing attention. Recent models like variational autoencoders (VAEs)~\citep{kingma2013auto,rezende2014stochastic} and generative adversarial nets (GANs)~\citep{goodfellow2014generative,gutmann2014likelihood}, have delivered impressive advances in performance and generated a lot of excitement. 

Generative models can be classified into two categories: \emph{prescribed models} and \emph{implicit models}~\citep{diggle1984monte,mohamed2016learning}. Prescribed models are defined by an explicit specification of the density, and so their unnormalized complete likelihood can be usually expressed in closed form. Examples include mixture of Gaussians~\citep{everitt1985mixture}, hidden Markov models~\citep{baum1966statistical} and Boltzmann machines~\citep{hinton1986learning}. Because computing the normalization constant, also known as the partition function, is generally intractable, sampling from these models can be challenging.

On the other hand, implicit models are defined most naturally in terms of a (simple) sampling procedure. Most models take the form of a deterministic parameterized transformation $T_{\theta}(\cdot)$ of an analytic distribution, like an isotropic Gaussian. This can be naturally viewed as the distribution induced by the following sampling procedure:
\begin{enumerate}
\item Sample $\mathbf{z} \sim \mathcal{N}(0,\mathbf{I})$
\item Return $\mathbf{x} \coloneqq T_{\theta}(\mathbf{z})$
\end{enumerate}
The transformation $T_{\theta}(\cdot)$ often takes the form of a highly expressive function approximator, like a neural net. Examples include generative adversarial nets (GANs)~\citep{goodfellow2014generative,gutmann2014likelihood} and generative moment matching nets (GMMNs)~\citep{li2015generative,dziugaite2015training}. The marginal likelihood of such models can be characterized as follows:
\[
p_{\theta}(\mathbf{x})=\frac{\partial}{\partial x_{1}}\cdots\frac{\partial}{\partial x_{d}}\int_{\left\{ \mathbf{z}\left|\forall i\;\left(T_{\theta}(\mathbf{z})\right)_{i}\leq x_{i}\right.\right\} }\phi(\mathbf{z})d\mathbf{z}
\]
where $\phi(\cdot)$ denotes the probability density function (PDF) of $\mathcal{N}(0,\mathbf{I})$.

In general, attempting to reduce this to a closed-form expression is hopeless. Evaluating it numerically is also challenging, since the domain of integration could consist of an exponential number of disjoint regions and numerical differentiation is ill-conditioned.

These two categories of generative models are not mutually exclusive. Some models admit both an explicit specification of the density and a simple sampling procedure and so can be considered as both prescribed and implicit. Examples include variational autoencoders~\citep{kingma2013auto,rezende2014stochastic}, their predecessors~\citep{mackay1995bayesian,bishop1998gtm} and extensions~\citep{burda2015importance}, and directed/autoregressive models, e.g., ~\citep{neal1992connectionist,bengio2000modeling,larochelle2011neural,oord2016pixel}.

\subsection{Challenges in Parameter Estimation}
Maximum likelihood~\citep{fisher1912absolute,edgeworth1908probable} is perhaps the standard method for estimating the parameters of a probabilistic model from observations. The maximum likelihood estimator (MLE) has a number of appealing properties: under mild regularity conditions, it is asymptotically consistent, efficient and normal. A long-standing challenge of training probabilistic models is the computational roadblocks of maximizing the log-likelihood function directly.

For prescribed models, maximizing likelihood directly requires computing the partition function, which is intractable for all but the simplest models. Many powerful techniques have been developed to attack this problem, including variational methods~\citep{jordan1999introduction}, contrastive divergence~\citep{hinton2002training,welling2002new}, score matching~\citep{hyvarinen2005estimation} and pseudolikelihood maximization~\citep{besag1975statistical}, among others.

For implicit models, the situation is even worse, as there is no term in the log-likelihood function that is in closed form; evaluating any term requires computing an intractable integral. As a result, maximizing likelihood in this setting seems hopelessly difficult. A variety of likelihood-free solutions have been proposed that in effect minimize a divergence measure between the data distribution and the model distribution. They come in two forms: those that minimize an \emph{f}-divergence, and those that minimize an integral probability metric~\citep{muller1997integral}. In the former category are GANs, which are based on the idea of minimizing the distinguishability between data and samples~\citep{tu2007learning,gutmann2010noise}. It has been shown that when given access to an infinitely powerful discriminator, the original GAN objective minimizes the Jensen-Shannon divergence, the $-\log D$ variant of the objective minimizes the reverse KL-divergence minus a bounded quantity~\citep{arjovsky2017towards}, and later extensions~\citep{nowozin2016f} minimize arbitrary \emph{f}-divergences. In the latter category are GMMNs which use maximum mean discrepancy (MMD)~\citep{gretton2007kernel} as the witness function.

In the case of GANs, despite the theoretical results, there are a number of challenges that arise in practice, such as mode dropping/collapse~\citep{goodfellow2014generative,arora2017gans}, vanishing gradients~\citep{arjovsky2017towards,sinn2017non} and training instability~\citep{goodfellow2014generative,arora2017generalization}. A number of explanations have been proposed to explain these phenomena and point out that many theoretical results rely on three assumptions: the discriminator must have infinite modelling capacity~\citep{goodfellow2014generative,arora2017generalization}, the number of samples from the true data distribution must be infinite~\citep{arora2017generalization,sinn2017non} and the gradient ascent-descent procedure~\citep{arrow1958studies,schmidhuber1992learning} can converge to a global pure-strategy Nash equilibrium~\citep{goodfellow2014generative,arora2017generalization}. 

When some of these assumptions do not hold, the theoretical guarantees do not necessarily apply, and the pathologies commonly observed in practice emerge as a natural consequence. For example, \citet{arora2017generalization} observed that when the number of parameters in the discriminator is polynomial in the number of dimensions, it cannot in general detect mode dropping/collapse, and so the generator at equilibrium could have finite support and drop an exponential number of modes, which is consistent with the empirical findings by~\citet{arora2017gans} and~\citet{wu2016quantitative}. When only a finite number of data examples are available, \citet{sinn2017non} showed that even under the assumptions of an infinite-capacity discriminator and a perfect optimization algorithm, there is no guarantee that the generator will recover the true data distribution at optimality, and that the gradient w.r.t. the generator parameters is zero almost everywhere at the optimal choice of the discriminator. \citet{arora2017generalization} showed that minimizing the Jensen-Shannon divergence or the Wasserstein distance between the \emph{empirical} data distribution and the model distribution does not necessarily minimize the same between the \emph{true} data distribution and the model distribution. Moreover, when the discriminator and generator have finite capacity, \citet{arora2017generalization} pointed out that a global pure-strategy Nash equilibrium may not exist, in which case training may be unstable and fail to converge. Because a global pure-strategy Nash equilibrium is not guaranteed to exist in the parametric setting, finding it is NP-hard~\citep{gilboa1989nash}, even in the case of two-player zero-sum games, e.g. GANs. Therefore, it is necessary to settle for finding a local pure-strategy Nash equilibrium. In general, however, gradient ascent-descent is only guaranteed to find a stationary point, which can only be a local Nash equilibrium if the Hessian of the objective (i.e.: the payoff to the discriminator) is positive semi-definite w.r.t. the generator parameters and negative semi-definite w.r.t. the discriminator parameters at the stationary point~\citep{mescheder2017numerics}. Unfortunately, a stationary point is not necessarily desirable and could correspond to a parameter setting where the discriminator is perfect and the gradient vanishes, which is not uncommon in practice because the supports of the data distribution and the model distribution are often nearly disjoint~\citep{arjovsky2017towards}. Even when the objective is concave in the discriminator parameters and convex in the generator parameters, convergence could be slow if the Hessian has an eigenvalue whose imaginary part is larger in magnitude than its real part~\citep{mescheder2017numerics}. A number of ways have been proposed that alleviate some of these issues, e.g.,~\citep{zhao2016energy,salimans2016improved,donahue2016adversarial,dumoulin2016adversarially,arjovsky2017wasserstein,hjelm2017boundary,li2017dualing,zhu2017unpaired}, but a way of solving all three issues simultaneously remains elusive.

\subsection{Our Contribution}

In this paper, we present an alternative method for estimating parameters in implicit models. Like the methods above, our method is likelihood-free, but can be shown to be equivalent to maximizing likelihood under some conditions. Our result holds when the capacity of the model is finite and the number of data examples is finite. Our method relies on the following observation: a model distribution that maximizes the likelihood of the data should assign high density to each of the data examples, and so if samples were drawn from the model, samples would be more likely to lie near data examples than elsewhere. To make this happen, we can simply adjust the parameters of the model so that each \emph{data example} is close to some \emph{sample}. Intuitively, this is the \emph{opposite} of what minimizing the distinguishability between data and samples does -- it ensures that each \emph{sample} is close to some \emph{data example}. Some data examples may not be chosen by any sample, resulting in mode dropping. 

The proposed method could sidestep the three issues mentioned above: mode collapse, vanishing gradients and training instability. Modes are not dropped because the loss ensures each data example has a sample nearby at optimality; gradients do not vanish because the gradient of the distance between a data example and its nearest sample does not become zero unless they coincide; training is stable because the estimator is the solution to a simple minimization problem. By leveraging recent advances in fast nearest neighbour search algorithms~\citep{li2016fast,li2017fast}, this approach is able to scale to large, high-dimensional datasets. 

\subsection{Organization}

This paper is organized into the following sections:

\begin{enumerate}[leftmargin=*]
  \setcounter{enumi}{1}
  \item \emph{Implicit Maximum Likelihood Estimator:} We define the proposed estimator and describe the algorithm. 
  \item \emph{Why Maximum Likelihood:} We justify why maximum likelihood is a sensible objective for the purposes of learning generative models. 
  \item \emph{Analysis:} We show the equivalence of the proposed estimator to maximum likelihood, under some conditions. 
  \item \emph{Experiments:} We describe the experimental settings and present results on standard datasets. 
  \item \emph{Discussion:} We discuss and address possible concerns about the proposed method. 
  \item \emph{Conclusion:} We summarize the method and the contributions. 
  \item \emph{Appendix:} We present the proofs of the theoretical results presented in the main paper. 
\end{enumerate}

\section{Implicit Maximum Likelihood Estimator}

\subsection{Intuition}

Consider a model distribution that maximizes the likelihood of the data. Since likelihood is the product of densities evaluated at all data examples, the model density at each data example should be high. Suppose we don't observe the model distribution directly, and instead only observe independent and identically distributed (i.i.d.) samples drawn from the model. Because the density at data examples is high, more samples are expected to lie near data examples than elsewhere. Can we construct an objective function that encourages the model to have this behaviour? A lot of samples near a data example typically means that radius of the neighbourhood around the data example that only contains one sample is small. Therefore, a natural objective is to minimize the distance from each data example to the nearest sample. 

\subsection{Definition}

We are given a set of $n$ data examples $\mathbf{x}_{1},\ldots,\mathbf{x}_{n}$ and some unknown parameterized probability distribution $P_{\theta}$ with density $p_{\theta}$. We also have access to an oracle that allows us to draw independent and identically distributed (i.i.d.) samples from $P_{\theta}$. 

Let $\tilde{\mathbf{x}}_{1}^{\theta},\ldots,\tilde{\mathbf{x}}_{m}^{\theta}$ be i.i.d. samples from $P_{\theta}$, where $m \geq n$. For each data example $\mathbf{x}_{i}$, we define a random variable $R_{i}^{\theta}$ to be the distance between $\mathbf{x}_{i}$ and the nearest sample. More precisely,
\[
R_{i}^{\theta}\coloneqq\min_{j\in[m]}\left\Vert \tilde{\mathbf{x}}_{j}^{\theta}-\mathbf{x}_{i}\right\Vert^{2} _{2}
\]
where $[m]$ denotes $\{1,\ldots,m\}$. 

The implicit maximum likelihood estimator $\hat{\theta}_{\mathrm{IMLE}}$ is defined as:
\begin{align*}
\hat{\theta}_{\mathrm{IMLE}} \coloneqq & \arg\min_{\theta}\mathbb{E}_{R_{1}^{\theta},\ldots,R_{n}^{\theta}}\left[\sum_{i=1}^{n}R_{i}^{\theta}\right] \\ 
 = \;& \arg\min_{\theta}\mathbb{E}_{\tilde{\mathbf{x}}_{1}^{\theta},\ldots,\tilde{\mathbf{x}}_{m}^{\theta}}\left[\sum_{i=1}^{n}\min_{j\in[m]}\left\Vert \tilde{\mathbf{x}}_{j}^{\theta}-\mathbf{x}_{i}\right\Vert^{2} _{2}\right]
\end{align*}

\subsection{Algorithm}

We outline the proposed parameter estimation procedure in Algorithm~\ref{alg:imle_proc}. In each outer iteration, we draw $m$ i.i.d. samples from the current model $P_{\theta}$. We then randomly select a batch of examples from the dataset and find the nearest sample from each data example. We then run a standard iterative optimization algorithm, like stochastic gradient descent (SGD), to minimize a sample-based version of the Implicit Maximum Likelihood Estimator (IMLE) objective. 

\begin{algorithm}
\caption{Implicit maximum likelihood estimation (IMLE) procedure}
\label{alg:imle_proc}
\begin{algorithmic}
\Require The dataset $D = \left\{ \mathbf{x}_{i}\right\} _{i=1}^{n}$ and a sampling mechanism for the implicit model $P_{\theta}$
\State Initialize $\theta$ to a random vector
\For{$k = 1$ \textbf{to} $K$}
    \State Draw i.i.d. samples $\tilde{\mathbf{x}}_{1}^{\theta},\ldots,\tilde{\mathbf{x}}_{m}^{\theta}$ from $P_{\theta}$\
    \State Pick a random batch $S \subseteq \{1,\ldots,n\}$
    \State $\sigma(i) \gets \arg \min_{j}\left\Vert \mathbf{x}_{i}-\tilde{\mathbf{x}}_{j}^{\theta}\right\Vert^{2}_{2}\;\forall i \in S$
    \For{$l = 1$ \textbf{to} $L$}
        \State Pick a random mini-batch $\tilde{S} \subseteq S$
        \State $\theta \gets \theta - \eta \nabla_{\theta}\left(\frac{n}{|\tilde{S}|} \sum_{i \in \tilde{S}}\left\Vert \mathbf{x}_{i}-\tilde{\mathbf{x}}_{\sigma(i)}^{\theta}\right\Vert^{2}_{2}\right)$
    \EndFor
\EndFor
\State \Return $\theta$
\end{algorithmic}
\end{algorithm}

Because our algorithm needs to solve a nearest neighbour search problem in each outer iteration, the scalability of our method depends on our ability to find the nearest neighbours quickly. This was traditionally considered to be a hard problem, especially in high dimensions. However, this is no longer the case, due to recent advances in nearest neighbour search algorithms~\citep{li2016fast,li2017fast}, which avoid the curse of dimensionality in time complexity that often arises in nearest neighbour search. 

Note that the use of Euclidean distance is not a major limitation of the proposed approach. A variety of distance metrics are either exactly or approximately equivalent to Euclidean distance in some non-linear embedding space, in which case the theoretical guarantees are inherited from the Euclidean case. This encompasses popular distance metrics used in the literature, like the Euclidean distance between the activations of a neural net, which is often referred to as a perceptual similarity metric~\citep{salimans2016improved,dosovitskiy2016generating}. The approach can be easily extended to use these metrics, though because this is the initial paper on this method, we focus on the vanilla setting of Euclidean distance in the natural representation of the data, e.g.: pixels, both for simplicity/clarity and for comparability to vanilla versions of other methods that do not use auxiliary sources of labelled data or leverage domain-specific prior knowledge. For distance metrics that cannot be embedded in Euclidean space, the analysis can be easily adapted with minor modifications as long as the volume of a ball under the metric has a simple dependence on its radius. 

\section{Why Maximum Likelihood}

There has been debate~\citep{huszar2015not} over whether maximizing likelihood of the data is the appropriate objective for the purposes of learning generative models. Recall that maximizing likelihood is equivalent to minimizing $D_{KL}\left(p_{\mathrm{data}}\left\Vert p_{\theta}\right.\right)$, where $p_{\mathrm{data}}$ denotes the empirical data distribution and $p_{\theta}$ denotes the model distribution. One proposed alternative is to minimize the reverse KL-divergence, $D_{KL}\left(p_{\theta}\left\Vert p_{\mathrm{data}}\right.\right)$, which is suggested~\citep{huszar2015not} to be better because it severely penalizes the model for generating an implausible sample, whereas the standard KL-divergence, $D_{KL}\left(p_{\mathrm{data}}\left\Vert p_{\theta}\right.\right)$, severely penalizes the model for assigning low density to a data example. As a result, when the model is underspecified, i.e. has less capacity than what's necessary to fit all the modes of the data distribution, minimizing $D_{KL}\left(p_{\theta}\left\Vert p_{\mathrm{data}}\right.\right)$ leads to a narrow model distribution that concentrates around a few modes, whereas minimizing $D_{KL}\left(p_{\mathrm{data}}\left\Vert p_{\theta}\right.\right)$ leads to a broad model distribution that hedges between modes. The success of GANs in generating good samples is often attributed to the former phenomenon~\citep{arjovsky2017towards}. 

This argument, however, relies on the assumption that we have access to an infinite number of samples from the true data distribution. In practice, however, this assumption rarely holds: if we had access to the true data distribution, then there is usually no need to fit a generative model, since we can simply draw samples from the true data distribution. What happens when we only have the empirical data distribution? Recall that $D_{KL}\left(p \left\Vert q\right.\right)$ is defined and finite only if $p$ is absolutely continuous w.r.t. $q$, i.e.: $q(x)=0$ implies $p(x)=0$ for all $x$. In other words, $D_{KL}\left(p \left\Vert q\right.\right)$ is defined and finite only if the support of $p$ is contained in the support of $q$. Now, consider the difference between $D_{KL}\left(p_{\mathrm{data}}\left\Vert p_{\theta}\right.\right)$ and $D_{KL}\left(p_{\theta}\left\Vert p_{\mathrm{data}}\right.\right)$: minimizing the former, which is equivalent to maximizing likelihood, ensures that the support of the model distribution contains all data examples, whereas minimizing the latter ensures that the support of the model distribution is contained in the support of the empirical data distribution, which is just the set of data examples. In other words, maximum likelihood disallows mode dropping, whereas minimizing reverse KL-divergence forces the model to assign zero density to unseen data examples and effectively prohibits generalization. Furthermore, maximum likelihood discourages the model from assigning low density to any data example, since doing so would make the likelihood, which is the product of the densities at each of the data examples, small. 

From a modelling perspective, because maximum likelihood is guaranteed to preserve all modes, it can make use of all available training data and can therefore be used to train high-capacity models that have a large number of parameters. In contrast, using an objective that permits mode dropping allows the model to pick and choose which data examples it wants to model. As a result, if the goal is to train a high-capacity model that can learn the underlying data distribution, we would not be able to do so using such an objective because we have no control over which modes the model chooses to drop. Put another way, we can think about the model's performance along two axes: its ability to generate plausible samples (precision) and its ability to generate all modes of the data distribution (recall). A model that successfully learns the underlying distribution should score high along both axes. If mode dropping is allowed, then an improvement in precision may be achieved at the expense of lower recall and could represent a move to a different point on the same precision-recall curve. As a result, since sample quality is an indicator of precision, improvement in sample quality in this setting may not mean an improvement in density estimation performance. On the other hand, if mode dropping is disallowed, since full recall is always guaranteed, an improvement in precision is achieved without sacrificing recall and so implies an upwards shift in the precision-recall curve. In this case, an improvement in sample quality does signify an improvement in density estimation performance, which may explain sample quality historically was an important way to evaluate the performance of generative models, most of which maximized likelihood. With the advent of generative models that permit mode dropping, however, sample quality is no longer a reliable indicator of density estimation performance, since good sample quality can be trivially achieved by dropping all but a few modes. In this setting, sample quality can be misleading, since a model with low recall on a lower precision-recall curve can achieve a better precision than a model with high recall on a higher precision-recall curve. Since it is hard to distinguish whether an improvement in sample quality is due to a move along the same precision-recall curve or a real shift in the curve, an objective that disallows mode dropping is critical tool that researchers can use to develop better models, since they can be sure that an apparent improvement in sample quality is due to a shift in the precision-recall curve. 

\section{Analysis}

Before formally stating the theoretical results, we first illustrate the intuition behind why the proposed estimator is equivalent to the maximum likelihood estimator under some conditions. For simplicity, we will consider the special case where we only have a single data example $\mathbf{x}_{1}$ and a single sample $\tilde{\mathbf{x}}_{1}^{\theta}$. Consider the total density of $P_{\theta}$ inside a ball of radius of $t$ centred at $\mathbf{x}_{1}$ as a function of $t$, a function that will be denoted as $\tilde{F}^{\theta}(t)$. If the density in the neighbourhood of $\mathbf{x}_{1}$ is high, then $\tilde{F}^{\theta}(t)$ would grow rapidly as $t$ increases. If, on the other hand, the density in the neighbourhood of $\mathbf{x}_{1}$ is low, then $\tilde{F}^{\theta}(t)$ would grow slowly. So, maximizing likelihood is equivalent to making $\tilde{F}^{\theta}(t)$ grow as fast as possible. To this end, we can maximize the area under the function $\tilde{F}^{\theta}(t)$, or equivalently, minimize the area under the function $1 - \tilde{F}^{\theta}(t)$. Observe that $\tilde{F}^{\theta}(t)$ can be interpreted as the cumulative distribution function (CDF) of the Euclidean distance between $\mathbf{x}_{1}$ and $\tilde{\mathbf{x}}_{1}^{\theta}$, which is a random variable because $\tilde{\mathbf{x}}_{1}^{\theta}$ is random and will be denoted as $\tilde{R}^{\theta}$. Because $\tilde{R}^{\theta}$ is non-negative, recall that $\mathbb{E}\left[ \tilde{R}^{\theta} \right] = \int_{0}^{\infty}\mathrm{Pr}\left(\tilde{R}^{\theta}>t\right)dt = \int_{0}^{\infty}\left(1-\tilde{F}^{\theta}(t)\right)dt$, which is exactly the area under the function $1 - \tilde{F}^{\theta}(t)$. Therefore, we can maximize likelihood of a data example $\mathbf{x}_{1}$ by minimizing $\mathbb{E}\left[\tilde{R}^{\theta}\right]$, or in other words, minimizing the expected distance between the data example and a random sample. To extend this analysis to the case with multiple data examples, we show in the appendix that if we have an objective function that is a summation, applying a monotonic transformation to each term and then reweighting appropriately preserves the optimizer under some conditions. 

We now state the key theoretical result formally. Please refer to the appendix for the proof.

\begin{thm}
\label{thm:equivalence}
Consider a set of observations $\mathbf{x}_{1},\ldots,\mathbf{x}_{n}$, a parameterized family of distributions $P_{\theta}$ with probability
density function (PDF) $p_{\theta}(\cdot)$ and a unique maximum likelihood solution $\theta^{*}$. For any $m\geq1$, let $\tilde{\mathbf{x}}_{1}^{\theta},\ldots,\tilde{\mathbf{x}}_{m}^{\theta}\sim P_{\theta}$ be i.i.d. random variables and define $\tilde{r}^{\theta}\coloneqq\left\Vert \tilde{\mathbf{x}}_{1}^{\theta}\right\Vert^{2}_{2}$, $R^{\theta}\coloneqq\min_{j\in[m]}\left\Vert \tilde{\mathbf{x}}_{j}^{\theta}\right\Vert^{2}_{2}$ and $R_{i}^{\theta}\coloneqq\min_{j\in[m]}\left\Vert \tilde{\mathbf{x}}_{j}^{\theta}-\mathbf{x}_{i}\right\Vert^{2}_{2}$. 
Let $F^{\theta}(\cdot)$ be the cumulative distribution function (CDF) of $\tilde{r}^{\theta}$ and $\Psi(z)\coloneqq\min_{\theta}\left\{ \mathbb{E}\left[R^{\theta}\right]\left|p_{\theta}(\mathbf{0})=z\right.\right\} $.

If $P_{\theta}$ satisfies the following:
\begin{itemize}
\item $p_{\theta}(\mathbf{x})$ is differentiable w.r.t. $\theta$ and continuous w.r.t. $\mathbf{x}$ everywhere.
\item $\forall \theta,\mathbf{v}$, there exists $\theta'$ such that $p_{\theta}(\mathbf{x})=p_{\theta'}(\mathbf{x}+\mathbf{v})\;\forall\mathbf{x}$. 
\item For any $\theta_{1},\theta_{2}$, there exists $\theta_{0}$ such that $F^{\theta_{0}}(t)\geq\max\left\{ F^{\theta_{1}}(t),F^{\theta_{2}}(t)\right\} \;\forall t\geq0$ and $p_{\theta_{0}}(\mathbf{0})=\max\left\{ p_{\theta_{1}}(\mathbf{0}),p_{\theta_{2}}(\mathbf{0})\right\} $. 
\item $\exists\tau>0$ such that $\forall i\in[n]\;\forall\theta\notin B_{\theta^{*}}(\tau)$, $p_{\theta}(\mathbf{x}_{i})<p_{\theta*}(\mathbf{x}_{i})$, where $B_{\theta^{*}}(\tau)$
denotes the ball centred at $\theta^{*}$ of radius $\tau$. 
\item $\Psi(z)$ is differentiable everywhere. 
\item For all $\theta$, if $\theta\neq\theta^{*}$, there exists $j\in[d]$ such that $\left\langle \left(\begin{array}{c}
\frac{\Psi'(p_{\theta}(\mathbf{x}_{1}))p_{\theta}(\mathbf{x}_{1})}{\Psi'(p_{\theta^{*}}(\mathbf{x}_{1}))p_{\theta^{*}}(\mathbf{x}_{1})}\\
\vdots\\
\frac{\Psi'(p_{\theta}(\mathbf{x}_{n}))p_{\theta}(\mathbf{x}_{n})}{\Psi'(p_{\theta^{*}}(\mathbf{x}_{n}))p_{\theta^{*}}(\mathbf{x}_{n})}
\end{array}\right),\left(\begin{array}{c}
\nabla_{\theta}\left(\log p_{\theta}(\mathbf{x}_{1})\right)_{j}\\
\vdots\\
\nabla_{\theta}\left(\log p_{\theta}(\mathbf{x}_{n})\right)_{j}
\end{array}\right)\right\rangle \neq 0$. 
\end{itemize}
Then,
\[
\arg\min_{\theta}\sum_{i=1}^{n}\frac{\mathbb{E}\left[R_{i}^{\theta}\right]}{\Psi'(p_{\theta^{*}}(\mathbf{x}_{i}))p_{\theta^{*}}(\mathbf{x}_{i})}=\arg\max_{\theta}\sum_{i=1}^{n}\log p_{\theta}(\mathbf{x}_{i})
\]
Furthermore, if $p_{\theta^{*}}(\mathbf{x}_{1})=\cdots = p_{\theta^{*}}(\mathbf{x}_{n})$, then,
\[
\arg\min_{\theta}\sum_{i=1}^{n}\mathbb{E}\left[R_{i}^{\theta}\right]=\arg\max_{\theta}\sum_{i=1}^{n}\log p_{\theta}(\mathbf{x}_{i})
\]
\end{thm}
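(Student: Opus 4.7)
The plan is to show $\arg\min_\theta J(\theta) = \{\theta^*\}$ for $J(\theta) := \sum_{i=1}^n \mathbb{E}[R_i^\theta] / \bigl(\Psi'(p_{\theta^*}(\mathbf{x}_i)) p_{\theta^*}(\mathbf{x}_i)\bigr)$, and then obtain the unweighted special case as an immediate corollary: when all $p_{\theta^*}(\mathbf{x}_i)$ coincide, the denominators are a common constant that factors out of the argmin. Throughout, I use that $\theta^*$ is the unique maximizer of $L(\theta) := \sum_{i=1}^n \log p_\theta(\mathbf{x}_i)$.

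First, condition~2 (translation invariance) lets me rewrite each term at the origin: for each $i$ choose $\theta_{(i)}$ with $p_{\theta_{(i)}}(\mathbf{y}) = p_\theta(\mathbf{y}+\mathbf{x}_i)$, so that $\mathbb{E}[R_i^\theta] = \mathbb{E}[R^{\theta_{(i)}}]$ and $p_{\theta_{(i)}}(\mathbf{0}) = p_\theta(\mathbf{x}_i)$. Condition~3 then yields structure for $\Psi$: iterating the max-CDF closure, among all $\theta'$ with $p_{\theta'}(\mathbf{0})=z$ there is one realizing $\sup_{\theta'} F^{\theta'}(t)$ pointwise in $t$, and through $\mathbb{E}[R^{\theta'}] = \int_0^\infty (1-F^{\theta'}(t))\,dt$ this same $\theta'$ attains $\Psi(z)$. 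So $\Psi$ is the value function of a minimum that is actually achieved, and by condition~5 it is differentiable. A consequence is a strict monotonic relation between $p_\theta(\mathbf{x}_i)$ and $\mathbb{E}[R_i^\theta]$, which is needed in the far-field step.

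The analytical heart of the proof is the envelope identity
\[
\nabla_\theta \mathbb{E}[R_i^\theta] = \Psi'(p_\theta(\mathbf{x}_i))\, p_\theta(\mathbf{x}_i)\, \nabla_\theta \log p_\theta(\mathbf{x}_i),
\]
to be established by arguing that conditions~2--3 force every variation of $\theta$ in the family to be tangent to the $\Psi$-realizing sub-family at the translated parameter, so the derivative of $\mathbb{E}[R^{\theta_{(i)}}]$ equals $\Psi'$ times the derivative of $p_{\theta_{(i)}}(\mathbf{0}) = p_\theta(\mathbf{x}_i)$, after which the chain rule $\nabla p_\theta = p_\theta \nabla \log p_\theta$ closes the identity. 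Dividing by the reweighting constants and summing,
\[
\nabla_\theta J(\theta) = \sum_{i=1}^n \frac{\Psi'(p_\theta(\mathbf{x}_i))\, p_\theta(\mathbf{x}_i)}{\Psi'(p_{\theta^*}(\mathbf{x}_i))\, p_{\theta^*}(\mathbf{x}_i)} \, \nabla_\theta \log p_\theta(\mathbf{x}_i),
\]
which at $\theta=\theta^*$ collapses to $\nabla L(\theta^*) = \mathbf{0}$, so $\theta^*$ is a critical point of $J$.

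For uniqueness, condition~6 is literally the statement that for every $\theta \neq \theta^*$ some coordinate $j$ of this gradient vector is nonzero, so $\theta^*$ is the only critical point in the interior. Condition~4 handles the far field: outside $B_{\theta^*}(\tau)$ every $p_\theta(\mathbf{x}_i) < p_{\theta^*}(\mathbf{x}_i)$, and the strict monotonicity of $\Psi$ (with sign-tracking through the fixed weights) forces $J(\theta) > J(\theta^*)$. Combining these, $\arg\min J = \{\theta^*\} = \arg\max L$. The main obstacle is the envelope identity above: condition~3 only says the max-CDF closure is in the family, not that \emph{every} $\theta$ realizes $\Psi(p_\theta(\mathbf{0}))$, so a priori only the inequality $\mathbb{E}[R^{\theta_{(i)}}] \geq \Psi(p_\theta(\mathbf{x}_i))$ holds. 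The delicate part is verifying that the derivative of $\mathbb{E}[R^{\theta_{(i)}}]$ with respect to $\theta$ still matches $\Psi'(p_\theta(\mathbf{x}_i))$ times the derivative of $p_\theta(\mathbf{x}_i)$, by combining conditions~2--3 carefully enough to transfer the envelope property from the $\Psi$-realizing curve $\theta^\dagger(z)$ to arbitrary $\theta$ in the family.
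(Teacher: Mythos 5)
Your route is genuinely different from the paper's. The paper first establishes a standalone lemma (Lemma~1) stating that if $\theta^*$ uniquely minimizes $\sum_i f_i$, $\Phi$ is strictly increasing and differentiable, and the weights are $w_i = 1/\Phi'(f_i(\theta^*))$, then under a far-field condition and the non-orthogonality condition $\theta^*$ also uniquely minimizes $\sum_i w_i \Phi(f_i(\theta))$; this is proved by a compactness/Extreme Value Theorem argument for existence and a critical-point counting argument for uniqueness. The main theorem then instantiates $f_i(\theta) = -\log p_\theta(\mathbf{x}_i)$ and $\Phi(y) = \Psi(e^{-y})$, after Lemmas~2--3 (which translate each $\mathbf{x}_i$ to the origin and use the one-sided CDF derivative $\partial_+ G_i^\theta(0) = p_\theta(\mathbf{x}_i)$) establish that $\Psi$ is strictly decreasing, hence $\Phi$ strictly increasing. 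You instead compute $\nabla_\theta J$ directly via an envelope identity and argue $\theta^*$ is the unique critical point.

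You have correctly identified the gap in your own argument, and it is genuine. The envelope identity
\[
\nabla_\theta \mathbb{E}[R_i^\theta] = \Psi'(p_\theta(\mathbf{x}_i))\, p_\theta(\mathbf{x}_i)\, \nabla_\theta \log p_\theta(\mathbf{x}_i)
\]
would follow from the chain rule \emph{if} $\mathbb{E}[R_i^\theta] = \Psi(p_\theta(\mathbf{x}_i))$ as functions of $\theta$, but the hypotheses only give the one-sided bound $\mathbb{E}[R_i^\theta] \geq \Psi(p_\theta(\mathbf{x}_i))$. Condition~3 produces one $\theta_0$ that CDF-dominates two given parameters; it does not say that an arbitrary $\theta$ attains the constrained minimum in the definition of $\Psi$. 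Without that equality, $\mathbb{E}[R_i^\theta]$ is not a function of $p_\theta(\mathbf{x}_i)$ alone, and neither your derivative identity nor your far-field strict inequality (which you want to extract from $\Psi$'s monotonicity) follows. You flag this as ``the delicate part'' but do not supply the argument that closes it, so the proposal is incomplete as written. It is worth noting that the paper's proof silently relies on the same identification in its last chain of equalities, where $\sum_i w_i \Psi(p_\theta(\mathbf{x}_i))$ is replaced by $\sum_i \mathbb{E}[R_i^\theta]/\bigl(\Psi'(p_{\theta^*}(\mathbf{x}_i)) p_{\theta^*}(\mathbf{x}_i)\bigr)$ (also, the sign does not work out there since $\Psi'<0$ makes the weights of opposite sign to Lemma~1's $w_i$); so the obstacle you hit is inherited from the theorem's hypotheses rather than specific to your route. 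If you add the missing equality as an explicit assumption, your direct argument works, but you would also need to supply an existence argument for the minimizer analogous to the compactness step inside Lemma~1, which your sketch omits.
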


Now, we examine the restrictiveness of each condition. The first condition is satisfied by nearly all analytic distributions. The second condition is satisfied by nearly all distributions that have an unrestricted location parameter, since one can simply shift the location parameter by $\mathbf{v}$. The third condition is satisfied by most distributions that have location and scale parameters, like a Gaussian distribution, since the scale can be made arbitrarily low and the location can be shifted so that the constraint on $p_{\theta}(\cdot)$ is satisfied. The fourth condition is satisfied by nearly all distributions, whose density eventually tends to zero as the distance from the optimal parameter setting tends to infinity. The fifth condition requires $\min_{\theta}\left\{ \mathbb{E}\left[R^{\theta}\right]\left|p_{\theta}(\mathbf{0})=z\right.\right\}$ to change smoothly as $\mathbf{z}$ changes. The final condition requires the two $n$-dimensional vectors, one of which can be chosen from a set of $d$ vectors, to be not exactly orthogonal. As a result, this condition is usually satisfied when $d$ is large, i.e. when the model is richly parameterized. 

There is one remaining difficulty in applying this theorem, which is that the quantity $1 / \Psi'(p_{\theta^{*}}(\mathbf{x}_{i}))p_{\theta^{*}}(\mathbf{x}_{i})$, which appears as an coefficient on each term in the proposed objective, is typically not known. If we consider a new objective that ignores the coefficients, i.e. $\sum_{i=1}^{n}\mathbb{E}\left[R_{i}^{\theta}\right]$, then minimizing this objective is equivalent to minimizing an upper bound on the ideal objective, $\sum_{i=1}^{n}\mathbb{E}\left[R_{i}^{\theta}\right]/\Psi'(p_{\theta^{*}}(\mathbf{x}_{i}))p_{\theta^{*}}(\mathbf{x}_{i})$. The tightness of this bound depends on the difference between the highest and lowest likelihood assigned to individual data points at the optimum, i.e. the maximum likelihood estimate of the parameters. Such a model should not assign high likelihoods to some points and low likelihoods to others as long as it has reasonable capacity, since doing so would make the overall likelihood, which is the product of the likelihoods of individual data points, low. Therefore, the upper bound is usually reasonably tight.

\begin{figure*}
    \centering
    \subfloat[MNIST]{
        \includegraphics[width=0.33\textwidth]{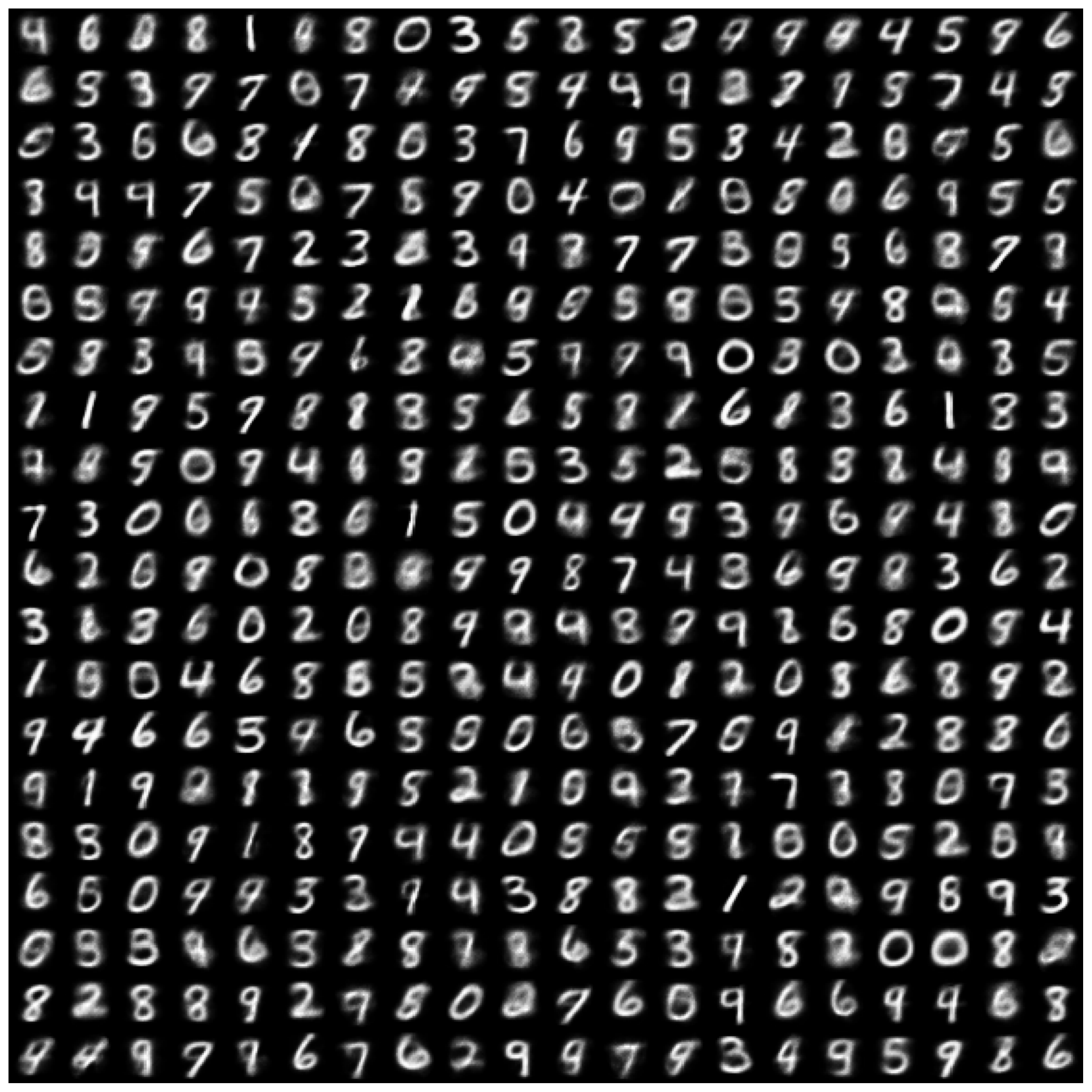}
        \label{fig:mnist_samples}
    }
    \subfloat[TFD]{
        \includegraphics[width=0.33\textwidth]{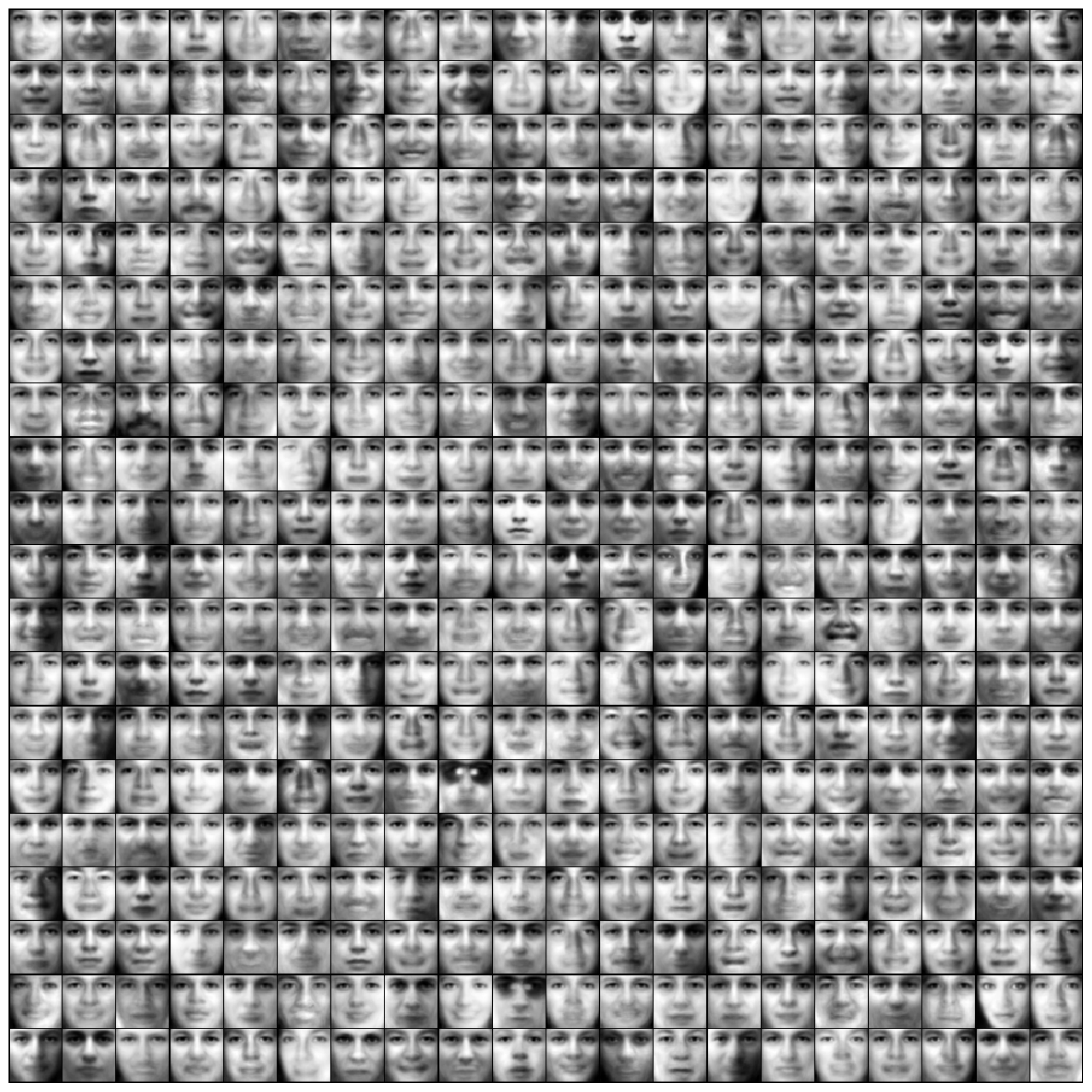}
        \label{fig:tfd_samples}
    }
    \subfloat[CIFAR-10]{
        \includegraphics[width=0.33\textwidth]{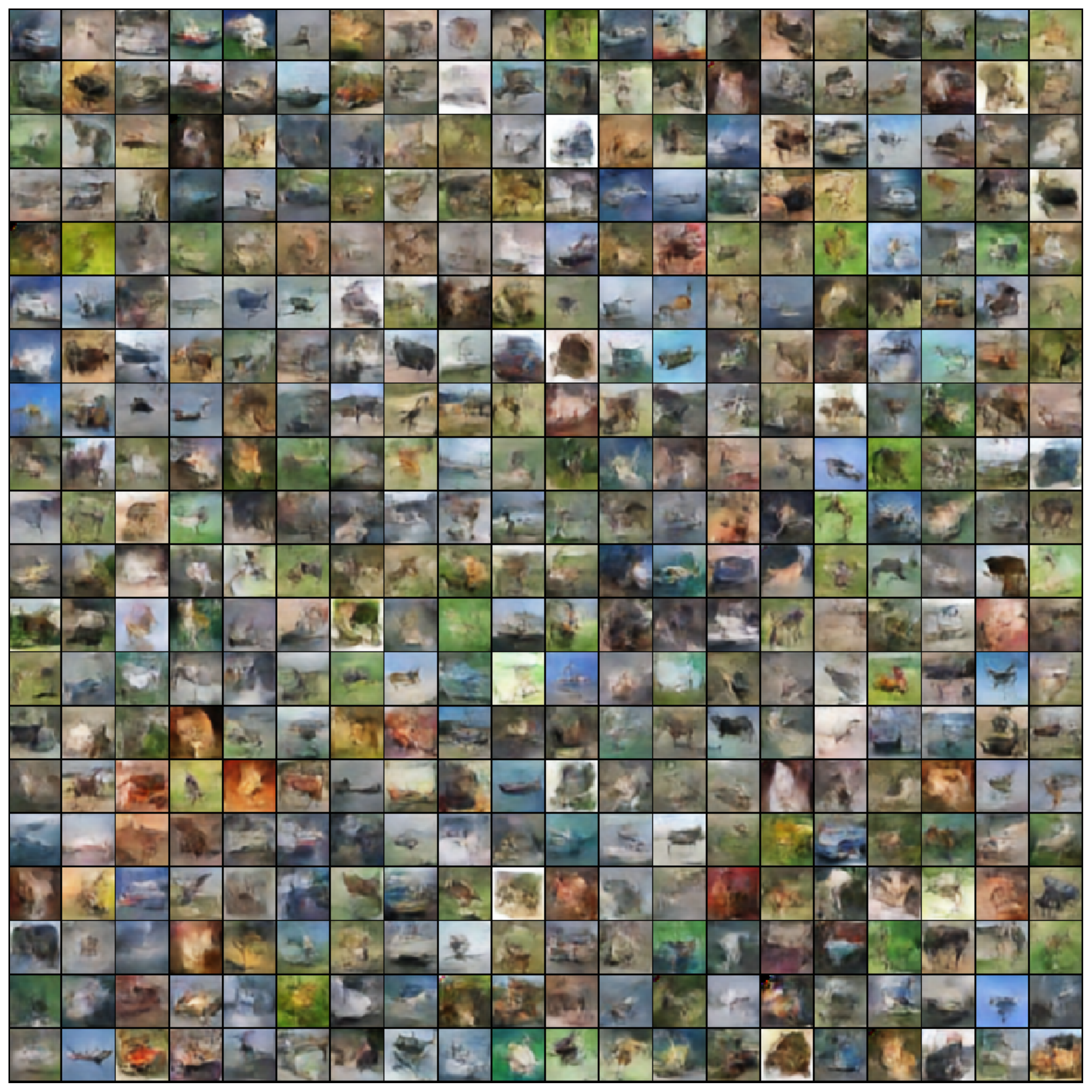}
        \label{fig:cifar10_samples}
    }
    \caption{\label{fig:samples}Representative random samples from the model trained on (a) MNIST, (b) Toronto Faces Dataset and (c) CIFAR-10.}
\end{figure*}

\section{Experiments}

We trained generative models using the proposed method on three standard benchmark datasets, MNIST, the Toronto Faces Dataset (TFD) and CIFAR-10. All models take the form of feedforward neural nets with isotropic Gaussian noise as input. 

For MNIST, the architecture consists of two fully connected hidden layers with 1200 units each followed by a fully connected output layer with 784 units. ReLU activations were used for hidden layers and sigmoids were used for the output layer. For TFD, the architecture is wider and consists of two fully connected hidden layers with 8000 units each followed by a fully connected output layer with 2304 units. For both MNIST and TFD, the dimensionality of the noise vector is 100. 

For CIFAR-10, we used a simple convolutional architecture with 1000-dimensional Gaussian noise as input. The architecture consists of five convolutional layers with 512 output channels and a kernel size of 5 that all produce $4 \times 4$ feature maps, followed by a bilinear upsampling layer that doubles the width and height of the feature maps. There is a batch normalization layer followed by leaky ReLU activations with slope $-0.2$ after each convolutional layer. This design is then repeated for each subsequent level of resolution, namely $8 \times 8$, $16 \times 16$ and $32 \times 32$, so that we have 20 convolutional layers, each with output 512 channels. We then add a final output layer with three output channels on top, followed by sigmoid activations. We note that this architecture has more capacity than typical architectures used in other methods, like~\citep{radford2015unsupervised}. This is because our method aims to capture all modes of the data distribution and therefore needs more modelling capacity than methods that are permitted to drop modes. 

Evaluation for implicit generative models in general remains an open problem. Various intrinsic and extrinsic evaluation metrics have been proposed, all of which have limitations. Extrinsic evaluation metrics measure performance indirectly via a downstream task~\citep{salimans2016improved}. Unfortunately, dependence on the downstream task could introduce bias and may not capture desirable properties of the generative model that do not affect performance on the task. Intrinsic evaluation metrics measure performance without relying on external models or data. Popular examples include estimated log-likelihood~\citep{bengio2014deep,wu2016quantitative} and visual assessment of sample quality. While recent literature has focused more on the latter and less on the former, it should be noted that they evaluate different properties -- sample quality reflects precision, i.e.: how accurate the model samples are compared to the ground truth, whereas estimated log-likelihood focuses on recall, i.e.: how much of the diversity in the data distribution the model captures. Consequently, both are important metrics; one is not a replacement for the other. As pointed out by~\citep{theis2015note}, ``qualitative as well as quantitative analyses based on model samples can be misleading about a model's density estimation performance, as well as the probabilistic model's performance in applications other than image synthesis.'' Two models that achieve different levels of precision may simply be at different points on the same precision-recall curve, and therefore may not be directly comparable. Models that achieve the same level of recall, on the other hand, may be directly compared. So, for methods that maximize likelihood, which are guaranteed to preserve all modes and achieve full recall, both sample quality and estimated log-likelihood capture precision. Because most generative models traditionally maximized likelihood or a lower bound on the likelihood, the only property that differed across models was precision, which may explain why sample quality has historically been seen as an important indicator of performance. However, in heterogenous experimental settings with different models optimized for various objectives, sample quality does not necessarily reflect how well a model learns the underlying data distribution. Therefore, under these settings, both precision and recall need to be measured. While there is not yet a reliable way to measure recall (given the known issues of estimated log-likelihoods in high dimensions), this does not mean that sample quality can be a valid substitute for estimated log-likelihoods, as it cannot detect the lack of diversity of samples. A secondary issue that is more easily solvable is that samples presented in papers are sometimes cherry-picked; as a result, they capture the maximum sample quality, but not necessarily the mean sample quality.

\begin{table}
\centering
\footnotesize
\begin{tabular}{lll}
\toprule 
Method & MNIST & TFD\\
\midrule
DBN~\citep{bengio2013better} & $138 \pm 2$ & $1909 \pm 66$\\
SCAE~\citep{bengio2013better} & $121 \pm 1.6$ & $2110 \pm 50$\\
DGSN~\citep{bengio2014deep} & $214 \pm 1.1$ & $1890 \pm 29$\\
GAN~\citep{goodfellow2014generative} & $225 \pm 2$ & $2057 \pm 26$\\
GMMN~\citep{li2015generative} & $147 \pm 2$ & $2085 \pm 25$\\
IMLE (Proposed) & $\mathbf{257 \pm 6}$ & $\mathbf{2139 \pm 27}$\\
\bottomrule
\vspace{1em}
\end{tabular}
\caption{Log-likelihood of the test data under the Gaussian Parzen window density estimated from samples generated by different methods. }
\label{tab:log_likelihood}
\end{table}

To mitigate these problems to some extent, we avoid cherry-picking and visualize randomly chosen samples, which are shown in Figure~\ref{fig:samples}. We also report the estimated log-likelihood in Table~\ref{tab:log_likelihood}. As mentioned above, both evaluation criteria have biases/deficiencies, so performing well on either of these metrics does not necessarily indicate good density estimation performance. However, not performing badly on either metric can provide some comfort that the model is simultaneously able to achieve reasonable precision and recall. 

As shown in Figure~\ref{fig:samples}, despite its simplicity, the proposed method is able to generate reasonably good samples for MNIST, TFD and CIFAR-10. While it is commonly believed that minimizing reverse KL-divergence is necessary to produce good samples and maximizing likelihood necessarily leads to poor samples~\citep{grover2017flow}, the results suggest that this is not necessarily the case. Even though Euclidean distance was used in the objective, the samples do not appear to be desaturated or overly blurry. Samples also seem fairly diverse. This is supported by the estimated log-likelihood results in Table~\ref{tab:log_likelihood}. Because the model achieved a high score on that metric on both MNIST and TFD, this suggests that the model did not suffer from significant mode dropping. 

In Figure~\ref{fig:nn_comp}, we show samples and their nearest neighbours in the training set. Each sample is quite different from its nearest neighbour in the training set, suggesting that the model has not overfitted to examples in the training set. If anything, it seems to be somewhat underfitted, and further increasing the capacity of the model may improve sample quality. 

\begin{wrapfigure}{R}{0.5\textwidth}
  \centering
  \includegraphics[width=0.45\textwidth]{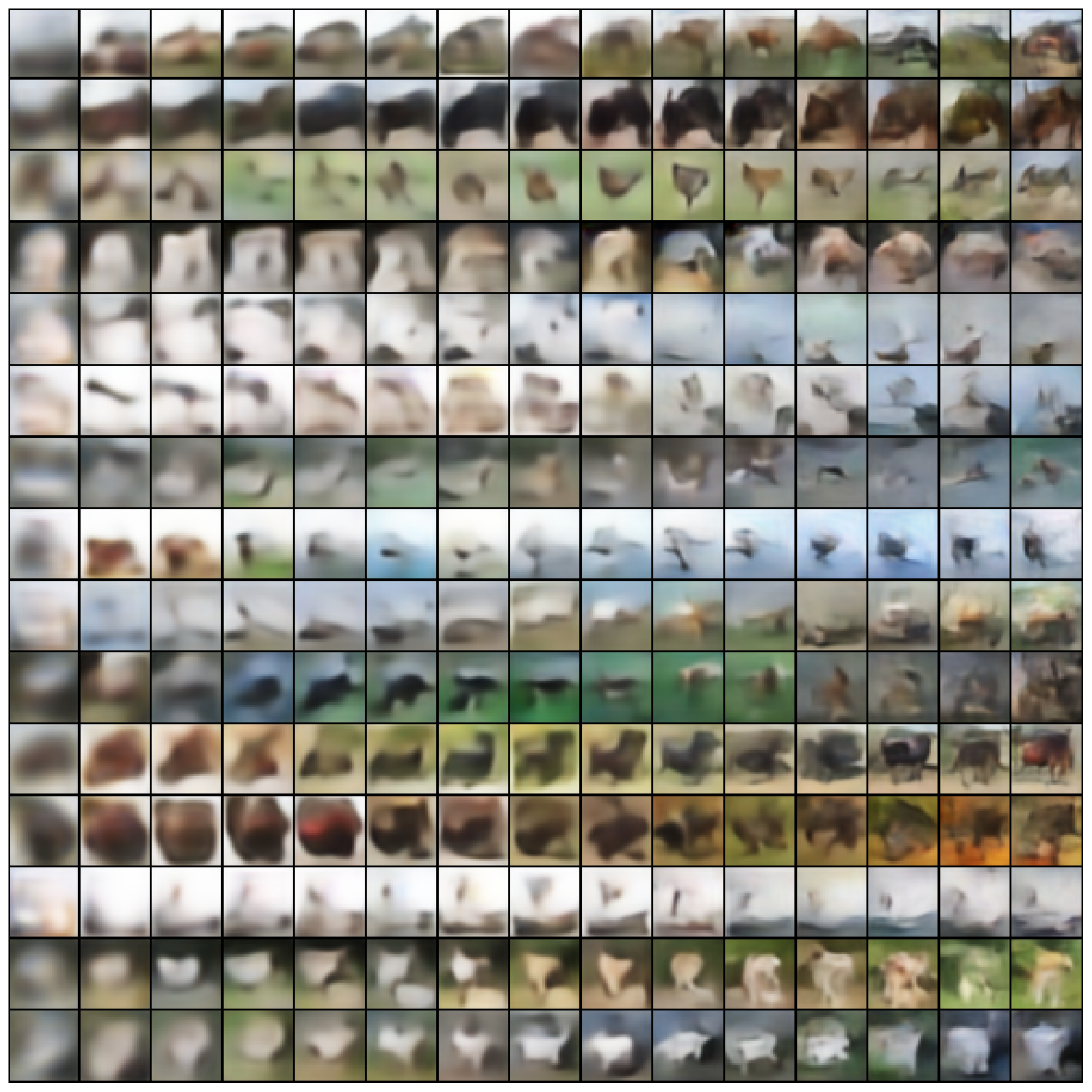}
  \caption{\label{fig:training}Samples corresponding to the same latent variable values at different points in time while training the model on CIFAR-10. Each row corresponds to a sample, and each column corresponds to a particular point in time. }
\end{wrapfigure}

Next, we visualize the learned manifold by walking along a geodesic on the manifold between pairs of samples. More concretely, we generate five samples, arrange them in arbitrary order, perform linear interpolation in latent variable space between adjacent pairs of samples, and generate an image from the interpolated latent variable. As shown in Figure~\ref{fig:interp}, the images along the path of interpolation appear visually plausible and do not have noisy artifacts. In addition, the transition from one image to the next appears smooth, including for CIFAR-10, which contrasts with findings in the literature that suggest the transition between two natural images tends to be abrupt. This indicates that the support of the model distribution has not collapsed to a set of isolated points and that the proposed method is able to learn the geometry of the data manifold, even though it does not learn a distance metric explicitly. 

Finally, we illustrate the evolution of samples as training progresses in Figure~\ref{fig:training}. As shown, the samples are initially blurry and become sharper over time. Importantly, sample quality consistently improves over time, which demonstrates the stability of training. 

While our sample quality may not yet be state-of-the-art, it is important to remember that these results are obtained under the setting of full recall. So, this does not necessarily mean that our method models the underlying data distribution less accurately than other methods that achieve better sample quality, as some of them may drop modes and therefore achieve less than full recall. As previously mentioned, this does not suggest a fundamental tradeoff between precision and recall that cannot be overcome -- on the contrary, our method provides researchers with a way of designing models that can improve the precision-recall curve without needing to worry that the observed improvements are due to a movement along the curve. With refinements to the model, it is possible to move the curve upwards and obtain better sample quality at any level of recall as a consequence. This is left for future work; as this is the initial paper on this approach, its value stems from the foundation it lays for a new research direction upon which subsequent work can be built, as opposed to the current results themselves. For this paper, we made a deliberate decision to keep the model simple, since non-essential practically motivated enhancements are less grounded in theory, may obfuscate the key underlying idea and could impart the impression that they are critical to making the approach work in practice. The fact that our method is able to generate more plausible samples on CIFAR-10 than other methods at similar stages of development, such as the initial versions of GAN~\citep{goodfellow2014generative} and PixelRNN~\citep{oord2016pixel}, despite the minimal sophistication of our method and architecture, shows the promise of the approach. Later iterations of other methods incorporate additional supervision in the form of pretrained weights and/or make task-specific modifications to the architecture and training procedure, which were critical to achieving state-of-the-art sample quality. We do believe the question of how the architecture should be refined in the context of our method to take advantage of task-specific insights is an important one, and is an area ripe for future exploration. 

\begin{figure*}
    \centering
    \subfloat[MNIST]{
        \includegraphics[width=0.33\textwidth]{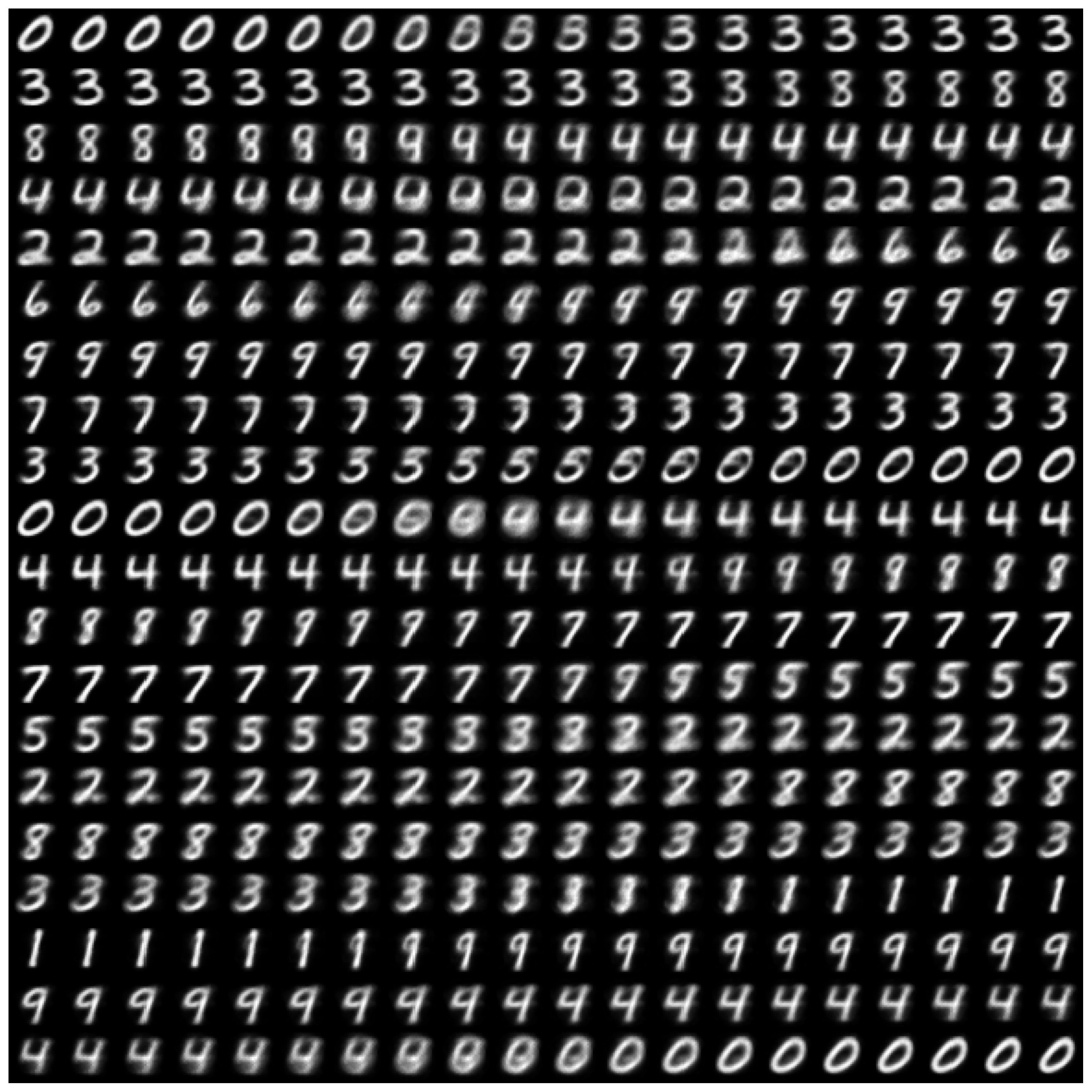}
        \label{fig:mnist_interp}
    }
    \subfloat[TFD]{
        \includegraphics[width=0.33\textwidth]{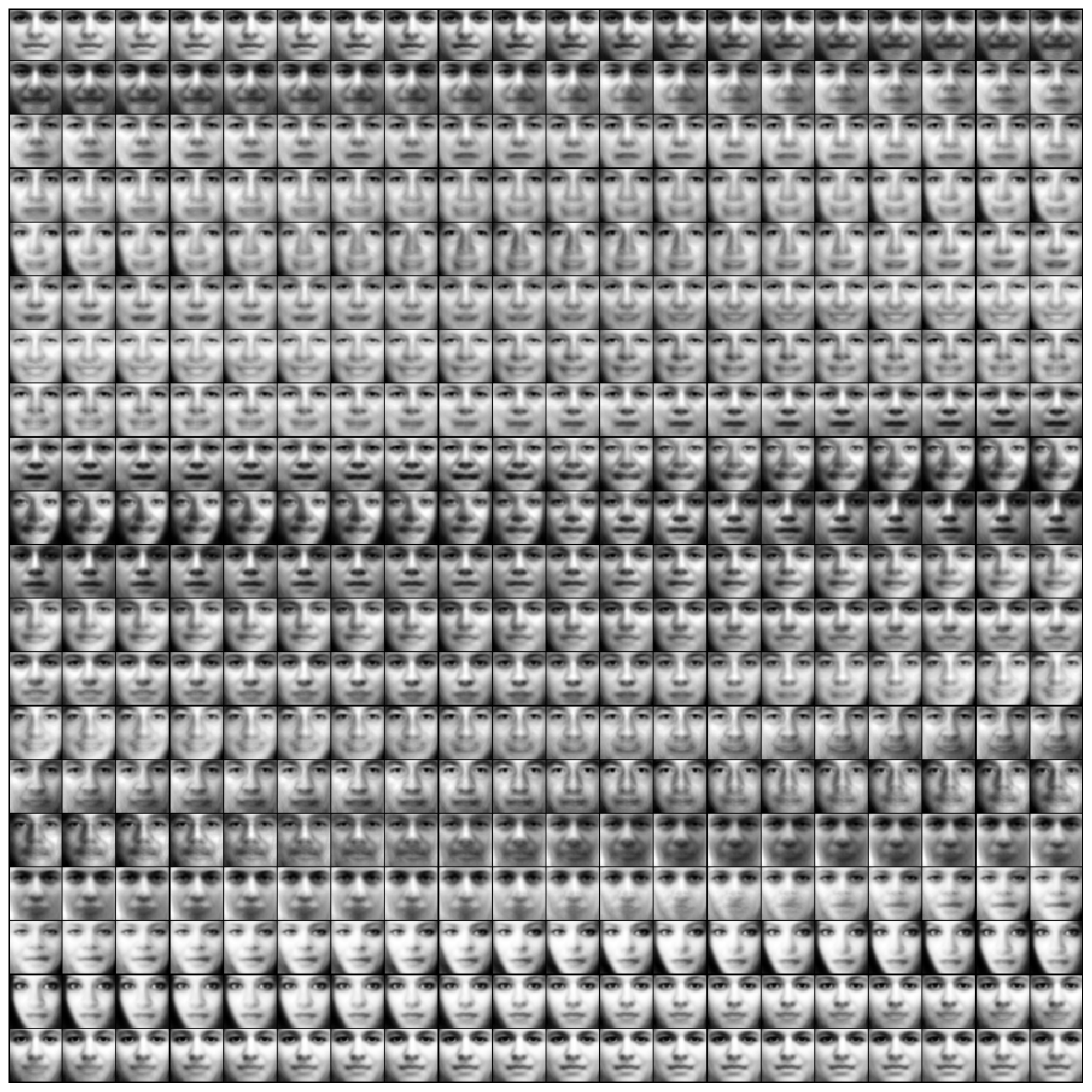}
        \label{fig:tfd_interp}
    }
    \subfloat[CIFAR-10]{
        \includegraphics[width=0.33\textwidth]{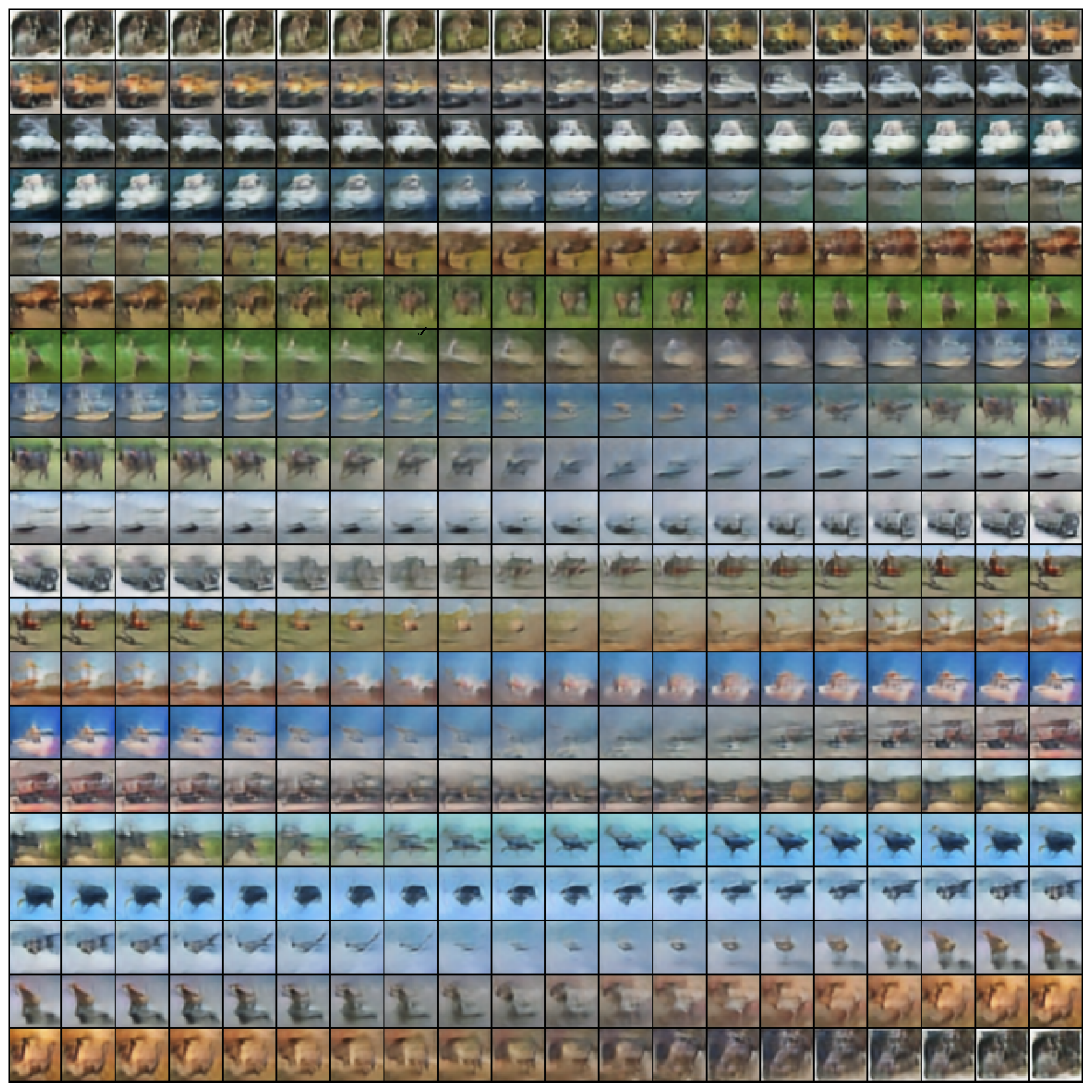}
        \label{fig:cifar10_interp}
    }
    \caption{\label{fig:interp}Linear interpolation between samples in the latent variable space. The first image in every row is an independent sample; all other images are interpolated between the previous and the subsequent sample. Images along the path of interpolation are shown in the figure arranged from left to right, top to bottom. They also wrap around, so that images in the last row are interpolations between the last and first samples.}
\end{figure*}

\begin{figure*}
    \centering
    \subfloat[MNIST]{
        \includegraphics[width=0.33\textwidth]{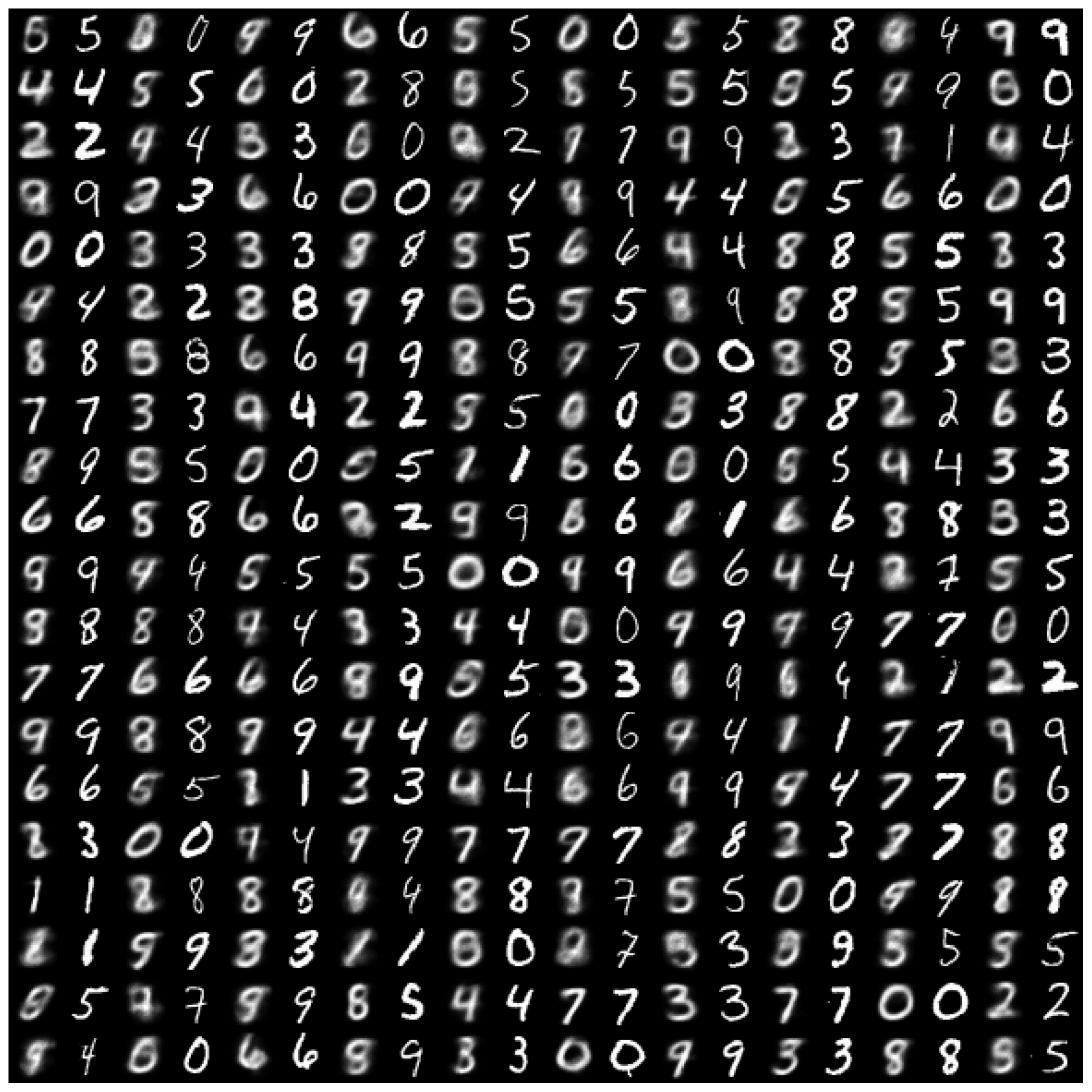}
        \label{fig:mnist_nn_comp}
    }
    \subfloat[TFD]{
        \includegraphics[width=0.33\textwidth]{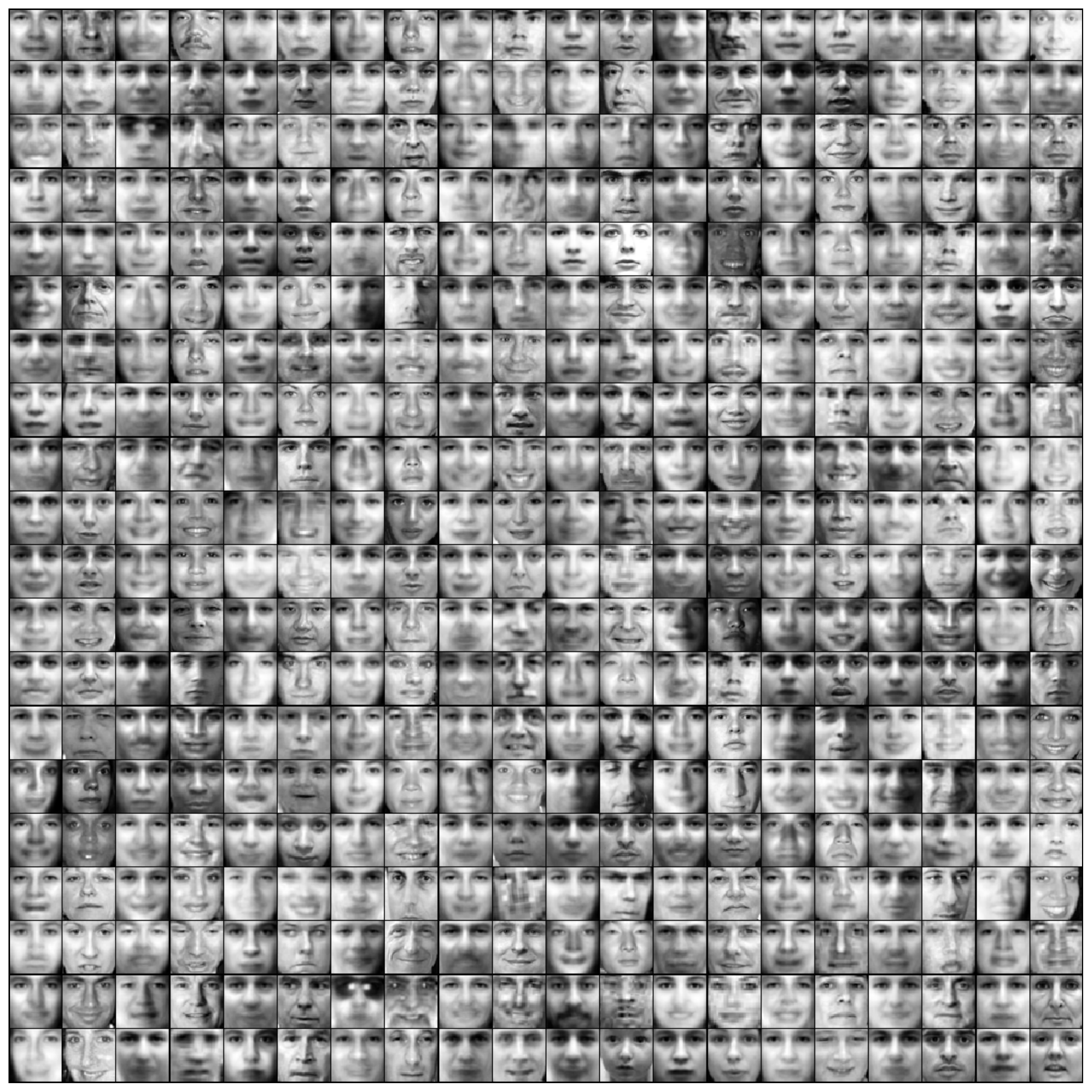}
        \label{fig:tfd_nn_comp}
    }
    \subfloat[CIFAR-10]{
        \includegraphics[width=0.33\textwidth]{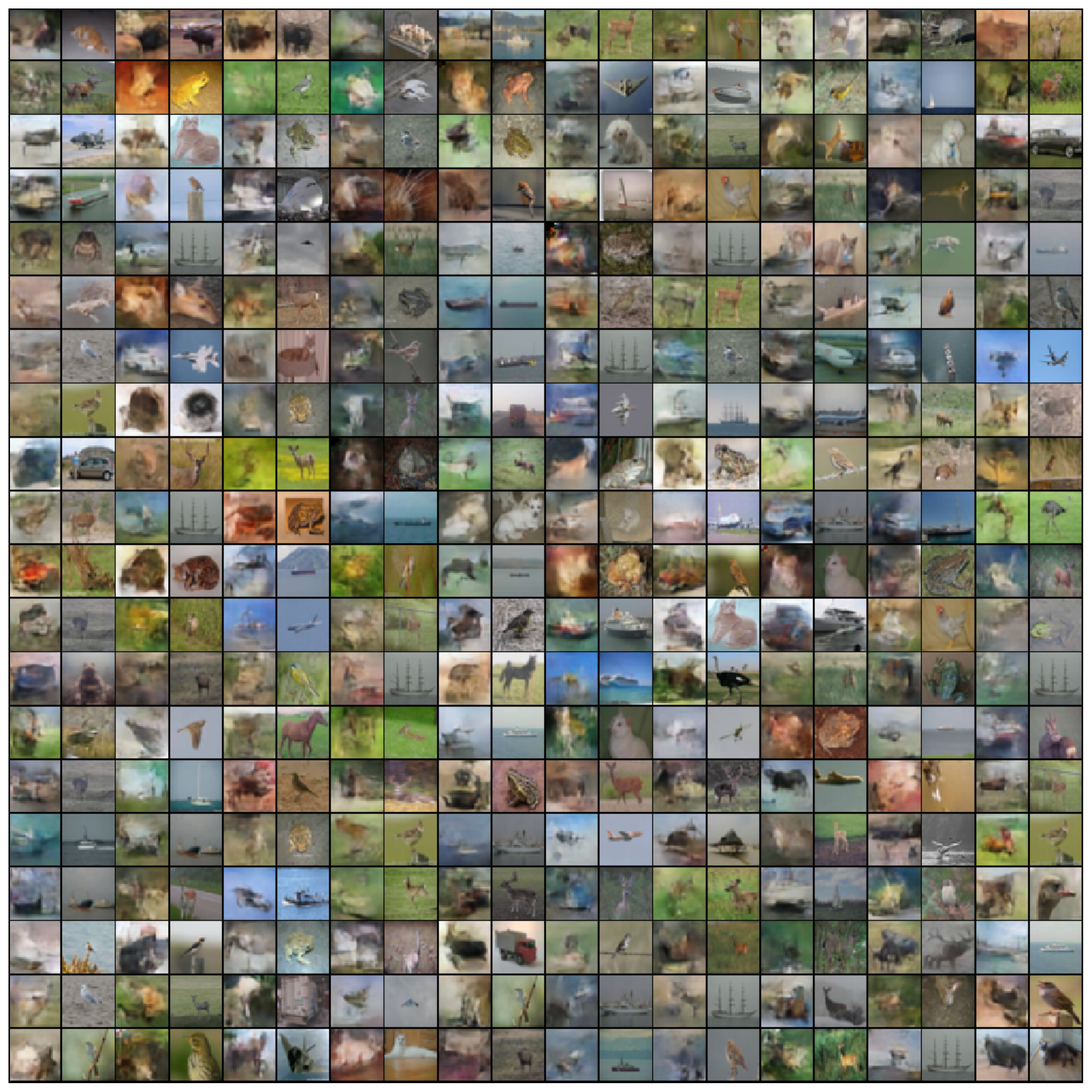}
        \label{fig:cifar10_nn_comp}
    }
    \caption{\label{fig:nn_comp}Comparison of samples and their nearest neighbours in the training set. Images in odd-numbered columns are samples; to the right of each sample is its nearest neighbour in the training set. }
\end{figure*}

\section{Discussion}

In this section, we consider and address some possible concerns about our method. 

\subsection{Does Maximizing Likelihood Necessarily Lead to Poor Sample Quality?}

It has been suggested~\citep{huszar2015not} that maximizing likelihood leads to poor sample quality because when the model is underspecified, it will try to cover all modes of the empirical data distribution and therefore assign high density to regions with few data examples. There is also empirical evidence~\citep{grover2017flow} for a negative correlation between sample quality and log likelihood, suggesting an inherent trade-off between maximizing likelihood and achieving good sample quality. A popular solution is to minimize reverse KL-divergence instead, which trades off recall for precision. This is an imperfect solution, as the ultimate goal is to model all the modes \emph{and} generate high-quality samples. 

Note that this apparent trade-off exists that the model capacity is assumed to be fixed. We argue that a more promising approach would be to increase the capacity of the model, so that it is less underspecified. As the model capacity increases, avoiding mode dropping becomes more important, because otherwise there will not be enough training data to fit the larger number of parameters to. This is precisely a setting appropriate for maximum likelihood. As a result, it is possible that a combination of increasing the model capacity and maximum likelihood training can achieve good precision and recall simultaneously. 

\subsection{Would Minimizing Distance to the Nearest Samples Cause Overfitting?}

When the model has infinite capacity, minimizing distance from data examples to their nearest samples will lead to a model distribution that memorizes data examples. The same is true if we maximize likelihood. Likewise, minimizing any divergence measure will lead to memorization of data examples, since the minimum divergence is zero and by definition, this can only happen if the model distribution is the same as the \emph{empirical} data distribution, whose support is confined to the set of data examples. This implies that whenever we have a finite number of data examples, any method that learns a model with infinite capacity will memorize the data examples and will hence overfit. 

To get around this, most methods learn a parametric model with finite capacity. In the parametric setting, the minimum divergence is not necessarily zero; the same is true for the minimum distance from data examples to their nearest samples. Therefore, the optimum of these objective functions is not necessarily a model distribution that memorizes data examples, and so overfitting will not necessarily occur. 

\subsection{Does Disjoint Support Break Maximum Likelihood?}

\citet{arjovsky2017wasserstein} observes that the data distribution and the model distribution are supported on low-dimensional manifolds and so they are unlikely to have a non-negligible intersection. They point out $D_{KL}\left(p_{\mathrm{data}}\left\Vert p_{\theta}\right.\right)$ would be infinite in this case, or equivalently, the likelihood would be zero. While this does not invalidate the theoretical soundness of maximum likelihood, since the maximum of a non-negative function that is zero almost everywhere is still well-defined, it does cause a lot of practical issues for gradient-based learning, as the gradient is zero almost everywhere. This is believed to be one reason that models like variational autoencoders~\citep{kingma2013auto,rezende2014stochastic} use a Gaussian distribution with high variance for the conditional likelihood/observation model rather than a distribution close to the Dirac delta, so that the support of the model distribution is broadened to cover all the data examples~\citep{arjovsky2017wasserstein}.

This issue does not affect our method, as our loss function is different from the log-likelihood function, even though their optima are the same (under some conditions). As the result, the gradients of our loss function are different from those of log-likelihood. When the supports of the data distribution and the model distribution do not overlap, each data example is likely far away from its nearest sample and so the gradient is large. Moreover, the farther the data examples are from the samples, the larger the gradient gets. Therefore, even when the gradient of log-likelihood can be tractably computed, there may be situations when the proposed method would work better than maximizing likelihood directly.  

\section{Conclusion}
We presented a simple and versatile method for likelihood-free parameter estimation that is equivalent to maximum likelihood under some conditions. The method works by drawing samples from the model, finding the nearest sample to every data example and adjusting the parameters of the model so that it is closer to the data example. The proposed method can capture the full diversity of the data and avoids common issues like mode collapse, vanishing gradients and training instability. The method combined with vanilla model architectures is able to achieve encouraging results on MNIST, TFD and CIFAR-10. 

\paragraph{Acknowledgements.} This work was supported by ONR MURI N00014-14-1-0671. Ke Li thanks the Natural Sciences and Engineering Research Council of Canada (NSERC) for fellowship support. The authors also thank Shubham Tulsiani, Andrew Owens, Kevin Swersky, Rich Zemel, Alyosha Efros, Jennifer Listgarten, Weicheng Kuo, Deepak Pathak, Christian H{\"a}ne, Allan Jabri, Trevor Darrell, Peter Bartlett, Bin Yu and the anonymous reviewers for feedback.

\footnotesize
\bibliography{imle}
\bibliographystyle{icml_2018}
\normalsize

\newpage

\section{Appendix}

Before proving the main result, we first prove the following intermediate results:

\begin{lem}
\label{lem:monotonic_transform}
Let $\Omega\subseteq\mathbb{R}^{d}$ and $V\subseteq\mathbb{R}$. For $i\in[N]$, let $f_{i}:\Omega\rightarrow V$ be differentiable on $\Omega$ and $\Phi:V\rightarrow\mathbb{R}$ be differentiable on $V$ and strictly increasing. Assume $\arg\min_{\theta\in\Omega}\sum_{i=1}^{N}f_{i}(\theta)$ exists and is unique. Let $\theta^{*}\coloneqq\arg\min_{\theta\in\Omega}\sum_{i=1}^{N}f_{i}(\theta)$ and $w_{i}\coloneqq1/\Phi'(f_{i}(\theta^{*}))$. If the following conditions hold:

\begin{itemize}
\item There is a bounded set $S\subseteq\Omega$ such that $\mathrm{bd}(S)\subseteq\Omega$, $\theta^{*}\in S$ and $\forall f_{i},\;\forall\theta\in\Omega\setminus S,\; f_{i}(\theta)>f_{i}(\theta^{*})$, where $\mathrm{bd}(S)$ denotes the boundary of $S$. 
\item For all $\theta\in\Omega$, if $\theta\neq\theta^{*}$, there exists $j\in[d]$ such that $\left\langle \left(\begin{array}{c}
w_{1}\Phi'(f_{1}(\theta))\\
\vdots\\
w_{n}\Phi'(f_{n}(\theta))
\end{array}\right),\left(\begin{array}{c}
\partial f_{1}/\partial \theta_{j}(\theta)\\
\vdots\\
\partial f_{n}/\partial \theta_{j}(\theta)
\end{array}\right)\right\rangle \neq0$.
\end{itemize}

Then $\arg\min_{\mathbf{x\in\Omega}}\sum_{i=1}^{N}w_{i}\Phi(f_{i}(\theta))$ exists and is unique. Furthermore, $\arg\min_{\theta\in\Omega}\sum_{i=1}^{N}w_{i}\Phi(f_{i}(\theta))=\arg\min_{\theta\in\Omega}\sum_{i=1}^{N}f_{i}(\theta)$. 
\end{lem}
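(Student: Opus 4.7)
The key observation is that the weights $w_i=1/\Phi'(f_i(\theta^*))$ are chosen precisely so that the gradient of $g(\theta)\coloneqq\sum_{i=1}^{N} w_i\Phi(f_i(\theta))$ at $\theta^*$ reproduces that of $\sum_i f_i$. My plan is to exploit this to identify $\theta^*$ as the unique critical point of $g$, then use the "coercivity" supplied by condition 1 to confine the minimum to a compact region and conclude existence, uniqueness, and the identification of minimizers.

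\emph{Critical-point analysis.} I would first compute
\[
\nabla g(\theta)=\sum_{i=1}^{N}w_i\,\Phi'(f_i(\theta))\,\nabla f_i(\theta),
\]
whose $j$-th coordinate is exactly the inner product appearing in condition 2. At $\theta=\theta^*$, each factor $w_i\Phi'(f_i(\theta^*))$ equals $1$ by definition of $w_i$, so $\nabla g(\theta^*)=\sum_i\nabla f_i(\theta^*)=0$ because $\theta^*$ minimizes $\sum_i f_i$. Condition 2 then states that this gradient does not vanish anywhere else in $\Omega$, so $\theta^*$ is the unique critical point of $g$. Strict monotonicity plus differentiability of $\Phi$ forces $\Phi'\geq 0$, and the well-definedness of $w_i$ forces $\Phi'(f_i(\theta^*))>0$, hence $w_i>0$; I will need this positivity below.

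\emph{Compactness argument.} For $\theta\in\Omega\setminus S$, condition 1 yields $f_i(\theta)>f_i(\theta^*)$ for every $i$, so strict monotonicity of $\Phi$ and positivity of $w_i$ give $g(\theta)>g(\theta^*)$. Hence any minimizer of $g$ on $\Omega$ must lie in $\bar S$, which is compact (bounded and closed) and contained in $\Omega$ by the hypothesis $\mathrm{bd}(S)\subseteq\Omega$, so continuity of $g$ guarantees attainment there. Any such minimizer $\theta'$ is a local minimum at an interior point of $\Omega$, so $\nabla g(\theta')=0$, and by the previous step $\theta'=\theta^*$. Both existence and uniqueness follow, and the identified minimizer coincides with the unique minimizer of $\sum_i f_i$.

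\emph{Expected difficulty.} The real content of the argument is the recognition that condition 2 is exactly the non-vanishing of $\nabla g$ away from $\theta^*$; once this is observed, the rest is a routine compactness/first-order argument. The only subtlety I anticipate is handling $\mathrm{bd}(S)$: one must use $\mathrm{bd}(S)\subseteq\Omega$ to ensure that a minimizer appearing on the boundary of $S$ is still interior to $\Omega$ (which requires $\Omega$ to be treated as open, the standard convention for parameter spaces) so that the first-order condition applies. If instead $\Omega$ had a boundary that could harbor minimizers, one would need additional hypotheses; as it stands the given hypothesis on $\mathrm{bd}(S)$ closes this gap.
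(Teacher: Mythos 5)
Your proposal is correct and follows essentially the same approach as the paper's proof: you compute $\nabla g(\theta)=\sum_i w_i\Phi'(f_i(\theta))\nabla f_i(\theta)$, note that the normalization $w_i\Phi'(f_i(\theta^*))=1$ makes $\theta^*$ a critical point of $g$, invoke condition 2 to make it the unique one, use condition 1 to show $g>g(\theta^*)$ outside $S$ (so a global minimizer must lie in the compact set $\bar S$ and thus exists by the Extreme Value Theorem), and identify that minimizer with $\theta^*$ by the first-order condition. One small improvement over the paper's write-up: you correctly observe that strict monotonicity plus differentiability only yields $\Phi'\geq 0$ in general, and that the positivity $\Phi'(f_i(\theta^*))>0$ needed for $w_i>0$ is actually supplied by the well-definedness of $w_i$, whereas the paper simply asserts $\Phi'(\cdot)>0$; you also flag explicitly the implicit assumption (shared by the paper) that $\Omega$ is open so the first-order condition applies at the minimizer. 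These are refinements of the same argument, not a different route.
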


\begin{proof}
Let $S\subseteq\Omega$ be the bounded set such that $\mathrm{bd}(S)\subseteq\Omega$, $\theta^{*}\in S$ and $\forall f_{i},\;\forall\theta\in\Omega\setminus S,\; f_{i}(\theta)>f_{i}(\theta^{*})$. Consider the closure of $S\coloneqq S\cup\mathrm{bd}(S)$, denoted as $\bar{S}$. Because $S\subseteq\Omega$ and $\mathrm{bd}(S)\subseteq\Omega$, $\bar{S}\subseteq\Omega$. Since $S$ is bounded, $\bar{S}$ is bounded. Because $\bar{S}\subseteq\Omega\subseteq\mathbb{R}^{d}$ and is closed and bounded, it is compact. 

Consider the function $\sum_{i=1}^{N}w_{i}\Phi(f_{i}(\cdot))$. By the differentiability of $f_{i}$'s and $\Phi$, $\sum_{i=1}^{N}w_{i}\Phi(f_{i}(\cdot))$ is differentiable on $\Omega$ and hence continuous on $\Omega$. By the compactness of $\bar{S}$ and the continuity of $\sum_{i=1}^{N}w_{i}\Phi(f_{i}(\cdot))$ on $\bar{S}\subseteq\Omega$, Extreme Value Theorem applies, which implies that $\min_{\theta\in\bar{S}}\sum_{i=1}^{N}w_{i}\Phi(f_{i}(\theta))$ exists. Let $\tilde{\theta}\in\bar{S}$ be such that $\sum_{i=1}^{N}w_{i}\Phi(f_{i}(\tilde{\theta}))=\min_{\theta\in\bar{S}}\sum_{i=1}^{N}w_{i}\Phi(f_{i}(\theta))$. 

By definition of $S$, $\forall f_{i},\;\forall\theta\in\Omega\setminus S,\; f_{i}(\theta)>f_{i}(\theta^{*})$, implying that $\Phi(f_{i}(\theta))>\Phi(f_{i}(\theta^{*}))$ since $\Phi$ is strictly increasing. Because $\Phi'(\cdot)>0$, $w_{i}>0$ and so $\sum_{i=1}^{N}w_{i}\Phi(f_{i}(\theta))>\sum_{i=1}^{N}w_{i}\Phi(f_{i}(\theta^{*}))\;\forall\theta\in\Omega\setminus S$. At the same time, since $\theta^{*}\in S\subset\bar{S}$, by definition of $\tilde{\theta}$, $\sum_{i=1}^{N}w_{i}\Phi(f_{i}(\tilde{\theta}))\leq\sum_{i=1}^{N}w_{i}\Phi(f_{i}(\theta^{*}))$. Combining these two facts yields $\sum_{i=1}^{N}w_{i}\Phi(f_{i}(\tilde{\theta}))\leq\sum_{i=1}^{N}w_{i}\Phi(f_{i}(\theta^{*}))<\sum_{i=1}^{N}w_{i}\Phi(f_{i}(\theta))\;\forall\theta\in\Omega\setminus S$. Since the inequality is strict, this implies that $\tilde{\theta}\notin\Omega\setminus S$, and so $\tilde{\theta}\in\bar{S}\setminus\left(\Omega\setminus S\right)\subseteq\Omega\setminus\left(\Omega\setminus S\right)=S$. 

In addition, because $\tilde{\theta}$ is the minimizer of $\sum_{i=1}^{N}w_{i}\Phi(f_{i}(\cdot))$ on $\bar{S}$, $\sum_{i=1}^{N}w_{i}\Phi(f_{i}(\tilde{\theta}))\leq\sum_{i=1}^{N}w_{i}\Phi(f_{i}(\theta))\;\forall\theta\in\bar{S}$. So, $\sum_{i=1}^{N}w_{i}\Phi(f_{i}(\tilde{\theta}))\leq\sum_{i=1}^{N}w_{i}\Phi(f_{i}(\theta))\;\forall\theta\in\bar{S}\cup\left(\Omega\setminus S\right)\supseteq S\cup\left(\Omega\setminus S\right)=\Omega$. Hence, $\tilde{\theta}$ is a minimizer of $\sum_{i=1}^{N}w_{i}\Phi(f_{i}(\cdot))$ on $\Omega$, and so $\min_{\theta\in\Omega}\sum_{i=1}^{N}w_{i}\Phi(f_{i}(\theta))$ exists. Because $\sum_{i=1}^{N}w_{i}\Phi(f_{i}(\cdot))$ is differentiable on $\Omega$, $\tilde{\theta}$ must be a critical point of $\sum_{i=1}^{N}w_{i}\Phi(f_{i}(\cdot))$ on $\Omega$. 

On the other hand, since $\Phi$ is differentiable on $V$ and $f_{i}(\theta)\in V$ for all $\theta\in\Omega$, $\Phi'(f_{i}(\theta))$ exists for all $\theta\in\Omega$. So,
\begin{align*}
\nabla\left(\sum_{i=1}^{N}w_{i}\Phi(f_{i}(\theta))\right) = & \sum_{i=1}^{N}w_{i}\nabla\left(\Phi(f_{i}(\theta))\right) \\
 = & \sum_{i=1}^{N}w_{i}\Phi'(f_{i}(\theta))\nabla f_{i}(\theta) \\ 
 = & \sum_{i=1}^{N}\frac{\Phi'(f_{i}(\theta))}{\Phi'(f_{i}(\theta^{*}))}\nabla f_{i}(\theta)
\end{align*}
At $\theta=\theta^{*}$, 
\begin{align*}
\nabla\left(\sum_{i=1}^{N}w_{i}\Phi(f_{i}(\theta^{*}))\right) = & \sum_{i=1}^{N}\frac{\Phi'(f_{i}(\theta^{*}))}{\Phi'(f_{i}(\theta^{*}))}\nabla f_{i}(\theta^{*}) \\
= & \sum_{i=1}^{N}\nabla f_{i}(\theta^{*})
 \end{align*}

Since each $f_{i}$ is differentiable on $\Omega$, $\sum_{i=1}^{N}f_{i}$ is differentiable on $\Omega$. Combining this with the fact that $\theta^{*}$ is the minimizer of $\sum_{i=1}^{N}f_{i}$ on $\Omega$, it follows that $\nabla\left(\sum_{i=1}^{N}f_{i}(\theta^{*})\right)=\sum_{i=1}^{N}\nabla f_{i}(\theta^{*})=0$. Hence, $\nabla\left(\sum_{i=1}^{N}w_{i}\Phi(f_{i}(\theta^{*}))\right)=0$ and so $\theta^{*}$ is a critical point of $\sum_{i=1}^{N}w_{i}\Phi(f_{i}(\cdot))$. 

Because $\forall\theta\in\Omega$, if $\theta\neq\theta^{*}$, $\exists j\in[d]$ such that $\left\langle \left(\begin{array}{c}
w_{1}\Phi'(f_{1}(\theta))\\
\vdots\\
w_{n}\Phi'(f_{n}(\theta))
\end{array}\right),\left(\begin{array}{c}
\partial f_{1}/\partial \theta_{j}(\theta)\\
\vdots\\
\partial f_{n}/\partial \theta_{j}(\theta)
\end{array}\right)\right\rangle \neq0$, $\sum_{i=1}^{N}w_{i}\Phi'(f_{i}(\theta))\nabla f_{i}(\theta)=\nabla\left(\sum_{i=1}^{N}w_{i}\Phi(f_{i}(\theta))\right)\neq0$ for any $\theta\neq\theta^{*}\in\Omega$. Therefore, $\theta^{*}$ is the only critical point of $\sum_{i=1}^{N}w_{i}\Phi(f_{i}(\cdot))$ on $\Omega$. Since $\tilde{\theta}$ is a critical point on $\Omega$, we can conclude that $\theta^{*}=\tilde{\theta}$, and so $\theta^{*}$ is a minimizer of $\sum_{i=1}^{N}w_{i}\Phi(f_{i}(\cdot))$ on $\Omega$. Since any other minimizer must be a critical point and $\theta^{*}$ is the only critical point, $\theta^{*}$ is the unique minimizer. So, $\arg\min_{\theta\in\Omega}\sum_{i=1}^{N}f_{i}(\theta)=\theta^{*}=\arg\min_{\theta\in\Omega}\sum_{i=1}^{N}w_{i}\Phi(f_{i}(\theta))$. 
\end{proof}

\begin{lem}
\label{lem:dist_func_derivative}
Let $P$ be a distribution on $\mathbb{R}^{d}$ whose density $p(\cdot)$ is continuous at a point $\mathbf{x}_{0}\in\mathbb{R}^{d}$ and $\mathbf{x}\sim P$ be a random variable. Let $\tilde{r}\coloneqq\left\Vert \mathbf{x}-\mathbf{x}_{0}\right\Vert _{2}$, $\kappa\coloneqq\pi^{d/2}/\Gamma\left(\frac{d}{2}+1\right)$, where $\Gamma(\cdot)$ denotes the gamma function~\footnote{The constant $\kappa$ is the the ratio of the volume of a $d$-dimensional ball of radius $\tilde{r}$ to a $d$-dimensional cube of side length $\tilde{r}$.}, and $r\coloneqq\kappa\tilde{r}^{d}$. Let $G(\cdot)$ denote the cumulative distribution function (CDF) of $r$ and $\partial_{+}G(\cdot)$ denote the one-sided derivative of $G$ from the right. Then, $\partial_{+}G(0)=p(\mathbf{x}_{0})$. 
\end{lem}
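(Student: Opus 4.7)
The plan is to express $G(r)$ explicitly as an integral of $p$ over a ball whose Lebesgue volume equals exactly $r$, and then use continuity of $p$ at $\mathbf{x}_0$ to evaluate the one-sided difference quotient.

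First I would observe that $r = \kappa \tilde{r}^d$ is strictly increasing in $\tilde{r}\geq 0$ with inverse $\tilde{r} = (r/\kappa)^{1/d}$. Hence for any $r\geq 0$,
\[
G(r) \;=\; \Pr\!\bigl(\kappa\|\mathbf{x}-\mathbf{x}_0\|_2^{d}\leq r\bigr) \;=\; \Pr\!\bigl(\|\mathbf{x}-\mathbf{x}_0\|_2\leq (r/\kappa)^{1/d}\bigr) \;=\; \int_{B_{\mathbf{x}_0}((r/\kappa)^{1/d})} p(\mathbf{y})\,d\mathbf{y},
\]
where $B_{\mathbf{x}_0}(\rho)$ denotes the closed Euclidean ball of radius $\rho$ about $\mathbf{x}_0$. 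In particular $G(0)=0$. By the definition of $\kappa$, the Lebesgue volume of $B_{\mathbf{x}_0}(\rho)$ is $\kappa\rho^{d}$, so the ball appearing in the integral has volume exactly $\kappa\cdot(r/\kappa)=r$. Consequently the difference quotient
\[
\frac{G(r)-G(0)}{r} \;=\; \frac{1}{r}\int_{B_{\mathbf{x}_0}((r/\kappa)^{1/d})} p(\mathbf{y})\,d\mathbf{y}
\]
is precisely the average value of $p$ over that ball.

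Next I would invoke the continuity of $p$ at $\mathbf{x}_0$. Given $\epsilon>0$, choose $\delta>0$ so that $|p(\mathbf{y})-p(\mathbf{x}_0)|<\epsilon$ whenever $\|\mathbf{y}-\mathbf{x}_0\|_2\leq \delta$. For all $r>0$ small enough that $(r/\kappa)^{1/d}\leq\delta$,
\[
\left|\frac{G(r)}{r}-p(\mathbf{x}_0)\right| \;=\; \left|\frac{1}{r}\int_{B_{\mathbf{x}_0}((r/\kappa)^{1/d})}\bigl(p(\mathbf{y})-p(\mathbf{x}_0)\bigr)\,d\mathbf{y}\right| \;\leq\; \epsilon,
\]
where the last inequality uses that the integrand is bounded by $\epsilon$ in absolute value and the domain of integration has volume $r$. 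Sending $r\to 0^+$ and then $\epsilon\to 0$ yields $\partial_{+}G(0)=p(\mathbf{x}_0)$, as claimed.

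There is no real obstacle: the lemma is essentially the Lebesgue differentiation theorem at a point of continuity, repackaged so that the radial variable is the ball \emph{volume} $r$ rather than the Euclidean radius $\tilde{r}$, which is exactly what makes the $\kappa\tilde{r}^d$ normalization cancel. The one subtlety worth flagging is that $p$ is assumed continuous only at $\mathbf{x}_0$, not on a neighborhood, so one should not try to differentiate $G$ away from $0$ and pass to a limit; the direct $\epsilon$-$\delta$ bound on the ball average above avoids this issue entirely.
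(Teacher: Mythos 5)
Your proof is correct and follows essentially the same route as the paper: both express $G$ as the integral of $p$ over a Euclidean ball whose volume equals the argument of $G$, then run an $\epsilon$-$\delta$ bound on the ball average using continuity of $p$ at $\mathbf{x}_0$. The only cosmetic difference is that you keep the ball-volume variable $r$ in the difference quotient directly, while the paper substitutes $\tilde h = \sqrt[d]{h/\kappa}$ first; the underlying computation is identical.
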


\begin{proof}
By definition of $\partial_{+}G(\cdot)$, 
\begin{align*}
\partial_{+}G(0) = & \lim_{h\rightarrow0^{+}}\frac{G(h)-G(0)}{h} = \lim_{h\rightarrow0^{+}}\frac{G(h)}{h} \\
 = & \lim_{h\rightarrow0^{+}}\frac{\mathrm{Pr}\left(r\le h\right)}{h} = \lim_{h\rightarrow0^{+}}\frac{\mathrm{Pr}\left(\tilde{r}\le\sqrt[d]{h/\kappa}\right)}{h}
\end{align*}

If we define $\tilde{h}\coloneqq\sqrt[d]{h/\kappa}$, the above can be re-written as:
\begin{align*}
\partial_{+}G(0) = & \lim_{\tilde{h}\rightarrow0^{+}}\frac{\mathrm{Pr}\left(\tilde{r}\le\tilde{h}\right)}{\kappa\tilde{h}^{d}} = \lim_{\tilde{h}\rightarrow0^{+}}\frac{\int_{B_{\mathbf{x}_{0}}(\tilde{h})}p(\mathbf{u})d\mathbf{u}}{\kappa\tilde{h}^{d}}
\end{align*}

We want to show that $\lim_{\tilde{h}\rightarrow0^{+}}\left(\int_{B_{\mathbf{x}_{0}}(\tilde{h})}p(\mathbf{u})d\mathbf{u}\right)/\kappa\tilde{h}^{d}=p(\mathbf{x}_{0})$.
In other words, we want to show $\forall\epsilon>0\;\exists\delta>0$ such that $\forall\tilde{h}\in(0,\delta)$, $\left|\frac{\int_{B_{\mathbf{x}_{0}}(\tilde{h})}p(\mathbf{u})d\mathbf{u}}{\kappa\tilde{h}^{d}}-p(\mathbf{x}_{0})\right|<\epsilon$. 

Let $\epsilon>0$ be arbitrary. 

Since $p(\cdot)$ is continuous at $\mathbf{x}_{0}$, by definition, $\forall\tilde{\epsilon}>0\;\exists\tilde{\delta}>0$ such that $\forall\mathbf{u}\in B_{\mathbf{x}_{0}}(\tilde{\delta})$, $\left|p(\mathbf{u})-p(\mathbf{x}_{0})\right|<\tilde{\epsilon}$. Let $\tilde{\delta}>0$ be such that $\forall\mathbf{u}\in B_{\mathbf{x}_{0}}(\tilde{\delta})$, $p(\mathbf{x}_{0})-\epsilon<p(\mathbf{u})<p(\mathbf{x}_{0})+\epsilon$. We choose $\delta=\tilde{\delta}$. 

Let $0<\tilde{h}<\delta$ be arbitrary. Since $p(\mathbf{x}_{0})-\epsilon<p(\mathbf{u})<p(\mathbf{x}_{0})+\epsilon\;\forall\mathbf{u}\in B_{\mathbf{x}_{0}}(\tilde{\delta})=B_{\mathbf{x}_{0}}(\delta)\supset B_{\mathbf{x}_{0}}(\tilde{h})$,
\begin{align*}
\int_{B_{\mathbf{x}_{0}}(\tilde{h})}p(\mathbf{u})d\mathbf{u} < & \int_{B_{\mathbf{x}_{0}}(\tilde{h})}\left(p(\mathbf{x}_{0})+\epsilon\right)d\mathbf{u} \\
= & \left(p(\mathbf{x}_{0})+\epsilon\right)\int_{B_{\mathbf{x}_{0}}(\tilde{h})}d\mathbf{u}
\end{align*}
Observe that $\int_{B_{\mathbf{x}_{0}}(\tilde{h})}d\mathbf{u}$ is the volume of a $d$-dimensional ball of radius $\tilde{h}$, so $\int_{B_{\mathbf{x}_{0}}(\tilde{h})}d\mathbf{u}=\kappa\tilde{h}^{d}$. Thus, $\int_{B_{\mathbf{x}_{0}}(\tilde{h})}p(\mathbf{u})d\mathbf{u}<\kappa\tilde{h}^{d}\left(p(\mathbf{x}_{0})+\epsilon\right)$, implying that $\left(\int_{B_{\mathbf{x}_{0}}(\tilde{h})}p(\mathbf{u})d\mathbf{u}\right) / \kappa\tilde{h}^{d}<p(\mathbf{x}_{0})+\epsilon$. By similar reasoning, we conclude that $\left(\int_{B_{\mathbf{x}_{0}}(\tilde{h})}p(\mathbf{u})d\mathbf{u}\right) / \kappa\tilde{h}^{d}>p(\mathbf{x}_{0})-\epsilon$. 

Hence,
\begin{align*}
\left|\frac{\int_{B_{\mathbf{x}_{0}}(\tilde{h})}p(\mathbf{u})d\mathbf{u}}{\kappa\tilde{h}^{d}}-p(\mathbf{x}_{0})\right|<\epsilon \; \forall \tilde{h} \in (0,\delta)
\end{align*}
Therefore, 
\begin{align*}
\partial_{+}G(0)=\lim_{\tilde{h}\rightarrow0^{+}}\frac{\int_{B_{\mathbf{x}_{0}}(\tilde{h})}p(\mathbf{u})d\mathbf{u}}{\kappa\tilde{h}^{d}}=p(\mathbf{x}_{0})
\end{align*}
\end{proof}

\begin{lem}
\label{lem:density_to_dist_transform}
Let $P_{\theta}$ be a parameterized family of distributions on $\mathbb{R}^{d}$ with parameter $\theta$ and probability density function (PDF) $p_{\theta}(\cdot)$ that is continuous at a point $\mathbf{x}_{i}$. Consider a random variable $\tilde{\mathbf{x}}_{1}^{\theta}\sim P_{\theta}$ and define $\tilde{r}_{i}^{\theta}\coloneqq\left\Vert \tilde{\mathbf{x}}_{1}^{\theta}-\mathbf{x}_{i}\right\Vert^{2}_{2}$, whose cumulative distribution function (CDF) is denoted by $F_{i}^{\theta}(\cdot)$. Assume $P_{\theta}$ has the following property: for any $\theta_{1},\theta_{2}$, there exists $\theta_{0}$ such that $F_{i}^{\theta_{0}}(t)\geq\max\left\{ F_{i}^{\theta_{1}}(t),F_{i}^{\theta_{2}}(t)\right\} \;\forall t\geq0$ and $p_{\theta_{0}}(\mathbf{x}_{i})=\max\left\{ p_{\theta_{1}}(\mathbf{x}_{i}),p_{\theta_{2}}(\mathbf{x}_{i})\right\} $. For any $m\geq1$, let $\tilde{\mathbf{x}}_{1}^{\theta},\ldots,\tilde{\mathbf{x}}_{m}^{\theta}\sim P_{\theta}$ be i.i.d. random variables and define $R_{i}^{\theta}\coloneqq\min_{j\in[m]}\left\Vert \tilde{\mathbf{x}}_{j}^{\theta}-\mathbf{x}_{i}\right\Vert^{2}_{2}$. Then the function $\Psi_{i}:z\mapsto\min_{\theta}\left\{ \mathbb{E}\left[R_{i}^{\theta}\right]\left|p_{\theta}(\mathbf{x}_{i})=z\right.\right\} $ is strictly decreasing. 
\end{lem}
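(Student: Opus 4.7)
The plan is to show, for arbitrary $z_1 < z_2$ in the domain of $\Psi_i$, that $\Psi_i(z_1) > \Psi_i(z_2)$. Let $\theta_1,\theta_2$ be parameters attaining the defining minima, so that $p_{\theta_k}(\mathbf{x}_i) = z_k$ and $\mathbb{E}[R_i^{\theta_k}] = \Psi_i(z_k)$ for $k=1,2$. Applying the stated hypothesis on $P_\theta$ to the pair $(\theta_1,\theta_2)$ produces $\theta_0$ with $p_{\theta_0}(\mathbf{x}_i) = \max\{z_1,z_2\} = z_2$ and $F_i^{\theta_0}(t) \geq \max\{F_i^{\theta_1}(t), F_i^{\theta_2}(t)\}$ for every $t \geq 0$. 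The strategy is then to show that this $\theta_0$ simultaneously meets the level-set constraint for $z_2$ and achieves a strictly smaller expected nearest-sample distance than $\theta_1$, so that $\Psi_i(z_2) \leq \mathbb{E}[R_i^{\theta_0}] < \Psi_i(z_1)$.

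First I would convert $\mathbb{E}[R_i^\theta]$ into an integral of $F_i^\theta$. Because the $\tilde{\mathbf{x}}_j^\theta$ are i.i.d., $\Pr(R_i^\theta > t) = (1-F_i^\theta(t))^m$, so the tail-integral identity for non-negative random variables gives $\mathbb{E}[R_i^\theta] = \int_0^\infty (1-F_i^\theta(t))^m \, dt$. The pointwise domination $F_i^{\theta_0}(t) \geq F_i^{\theta_1}(t)$ then immediately yields $\mathbb{E}[R_i^{\theta_0}] \leq \mathbb{E}[R_i^{\theta_1}] = \Psi_i(z_1)$, while the definition of $\Psi_i$ gives $\Psi_i(z_2) \leq \mathbb{E}[R_i^{\theta_0}]$ since $p_{\theta_0}(\mathbf{x}_i) = z_2$. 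This chain already establishes weak monotonicity $\Psi_i(z_2) \leq \Psi_i(z_1)$.

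To upgrade one of these to a strict inequality I would invoke Lemma~\ref{lem:dist_func_derivative}. Writing the squared-distance CDF in terms of the ball-volume CDF yields $F_i^\theta(t) = G_\theta(\kappa t^{d/2})$, where $G_\theta$ is the volume-variable CDF for $P_\theta$ centered at $\mathbf{x}_i$. The lemma gives $\partial_+G_\theta(0) = p_\theta(\mathbf{x}_i)$, so $F_i^\theta(t)/(\kappa t^{d/2}) \to p_\theta(\mathbf{x}_i)$ as $t \to 0^+$. Since $p_{\theta_0}(\mathbf{x}_i) = z_2 > z_1 = p_{\theta_1}(\mathbf{x}_i)$, a short two-$\epsilon$ argument separating the two limits produces $\delta > 0$ with $F_i^{\theta_0}(t) > F_i^{\theta_1}(t)$ for every $t \in (0,\delta)$. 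Continuity of the two CDFs (which holds because $\tilde{\mathbf{x}}_1^\theta$ admits a continuous density, so $\tilde{r}_i^\theta$ has no atoms) then forces $(1-F_i^{\theta_0}(t))^m < (1-F_i^{\theta_1}(t))^m$ on an open subinterval of $(0,\delta)$, and integrating over $[0,\infty)$ gives the strict drop $\mathbb{E}[R_i^{\theta_0}] < \mathbb{E}[R_i^{\theta_1}]$. Combining with $\Psi_i(z_2) \leq \mathbb{E}[R_i^{\theta_0}]$ delivers $\Psi_i(z_2) < \Psi_i(z_1)$, completing the argument.

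The hard step is converting the one-sided-derivative statement from Lemma~\ref{lem:dist_func_derivative} into a genuine pointwise separation of $F_i^{\theta_0}$ and $F_i^{\theta_1}$ on an open interval of positive Lebesgue measure, and then propagating that separation through the $m$-th power and the integral; everything else — the existence of $\theta_0$, the tail-integral identity, and the continuity of $F_i^\theta$ — is either assumed in the hypothesis of the lemma or a routine consequence of $P_\theta$ admitting a continuous density. A minor subsidiary concern is whether the minimum defining $\Psi_i$ is actually attained on the two chosen level sets; if not, the argument should be repeated with near-minimizers and a limit passage at the end.
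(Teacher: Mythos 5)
Your proof is correct, and it actually takes a slightly cleaner route than the paper's. The core ingredients are the same: both write $\mathbb{E}[R_i^{\theta}] = \int_0^{\infty}(1-F_i^{\theta}(t))^m\,dt$, both invoke Lemma~\ref{lem:dist_func_derivative} to translate the density $p_{\theta}(\mathbf{x}_i)$ into the one-sided derivative of a rescaled CDF, and both split the integral into a neighbourhood of zero (where the derivative comparison forces a strict pointwise gap) and the complement (where the dominance hypothesis gives a weak inequality). The difference is in how the dominating parameter $\theta_0$ is used. The paper introduces the minimizer map $\phi_i(z)$ and needs the pointwise inequality $G_i^*(\cdot,z') \geq G_i^*(\cdot,z)$ between the \emph{minimizers} at $z'$ and $z$; to get this it takes an extra step showing $G_i^{\theta_0} = G_i^{\phi_i(z')}$ (via the observation that $\theta_0$ is feasible for the $z'$-constraint and weakly dominates $\phi_i(z')$, while $\phi_i(z')$ is optimal, so their integrals coincide and hence their CDFs coincide). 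You bypass that entirely by treating $\theta_0$ merely as a feasible competitor for the $\Psi_i(z_2)$ minimization, yielding $\Psi_i(z_2) \leq \mathbb{E}[R_i^{\theta_0}] < \mathbb{E}[R_i^{\theta_1}] = \Psi_i(z_1)$ directly; there is no need to identify $\theta_0$ with the actual minimizer at $z_2$. Two small remarks: the invocation of continuity of the CDFs is superfluous, since your two-$\epsilon$ argument already gives $F_i^{\theta_0}(t) > F_i^{\theta_1}(t)$ on all of $(0,\delta)$, a set of positive measure, which is enough to make the integral strictly drop; and your caveat about attainment of the minimum is moot because the lemma's definition of $\Psi_i$ as a $\min$ (and the paper's use of $\phi_i(z)$) already presupposes attainment, so you and the paper are on identical footing there.
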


\begin{proof}
Let $r_{i}^{\theta}\coloneqq\kappa\left(\tilde{r}_{i}^{\theta}\right)^{d/2}=\kappa\left\Vert \tilde{\mathbf{x}}_{1}^{\theta}-\mathbf{x}_{i}\right\Vert _{2}^{d}$ be a random variable and let $G_{i}^{\theta}(\cdot)$ be the CDF of $r_{i}^{\theta}$. Since $R_{i}^{\theta}$ is nonnegative,
\begin{align*}
\mathbb{E}\left[R_{i}^{\theta}\right] = & \int_{0}^{\infty}\mathrm{Pr}\left(R_{i}^{\theta}>t\right)dt \\ 
= & \int_{0}^{\infty}\left(\mathrm{Pr}\left(\left\Vert \tilde{\mathbf{x}}_{1}^{\theta}-\mathbf{x}_{i}\right\Vert^{2}_{2}>t\right)\right)^{m}dt \\
= & \int_{0}^{\infty}\left(\mathrm{Pr}\left(\kappa\left\Vert \tilde{\mathbf{x}}_{1}^{\theta}-\mathbf{x}_{i}\right\Vert _{2}^{d}>\kappa t^{d/2}\right)\right)^{m}dt \\
= & \int_{0}^{\infty}\left(\mathrm{Pr}\left(r{}_{i}^{\theta}>\kappa t^{d/2}\right)\right)^{m}dt \\
= & \int_{0}^{\infty}\left(1-G_{i}^{\theta}\left(\kappa t^{d/2}\right)\right)^{m}dt
\end{align*}
Also, by Lemma~\ref{lem:dist_func_derivative}, $p_{\theta}(\mathbf{x}_{i})=\partial_{+}G_{i}^{\theta}(0)$. 
Using these facts, we can rewrite $\min_{\theta}\left\{ \mathbb{E}\left[R_{i}^{\theta}\right]\left|p_{\theta}(\mathbf{x}_{i})=z\right.\right\} $ as $\min_{\theta}\left\{ \int_{0}^{\infty}\left(1-G_{i}^{\theta}\left(\kappa t^{d/2}\right)\right)^{m}dt\left|\partial_{+}G_{i}^{\theta}(0)=z\right.\right\} $. By definition of $\Psi_{i}$, $\min_{\theta}\left\{ \int_{0}^{\infty}\left(1-G_{i}^{\theta}\left(\kappa t^{d/2}\right)\right)^{m}dt\left|\partial_{+}G_{i}^{\theta}(0)=z\right.\right\}$ exists for all $z$. Let $\phi_{i}(z)$ be a value of $\theta$ that attains the minimum. Define $G_{i}^{*}(y,z)\coloneqq G_{i}^{\phi_{i}(z)}(y)$. By definition, $\frac{\partial_{+}}{\partial y}G_{i}^{*}(0,z)=z$, where $\frac{\partial_{+}}{\partial y}G_{i}^{*}(y,z)$ denotes the one-sided partial derivative from the right w.r.t. $y$. Also, since $G_{i}^{*}(\cdot,z)$ is the CDF of a distribution of a non-negative random variable, $G_{i}^{*}(0,z)=0$. 

By definition of $\frac{\partial_{+}}{\partial y}G_{i}^{*}(0,z)$, $\forall\epsilon>0\;\exists\delta>0$ such that $\forall h\in(0,\delta)$, $\left|\frac{G_{i}^{*}(h,z)-G_{i}^{*}(0,z)}{h}-z\right|<\epsilon$. 

Let $z'>z$. Let $\delta>0$ be such that $\forall h\in(0,\delta)$, $\left|\frac{G_{i}^{*}(h,z)-G_{i}^{*}(0,z)}{h}-z\right|<\frac{z'-z}{2}$ and $\delta'>0$ be such that $\forall h\in(0,\delta')$, $\left|\frac{G_{i}^{*}(h,z')-G_{i}^{*}(0,z')}{h}-z'\right|<\frac{z'-z}{2}$. 

Consider $h\in(0,\min(\delta,\delta'))$. Then, $\frac{G_{i}^{*}(h,z)-G_{i}^{*}(0,z)}{h}=\frac{G_{i}^{*}(h,z)}{h}<z+\frac{z'-z}{2}=\frac{z+z'}{2}$ and $\frac{G_{i}^{*}(h,z')-G_{i}^{*}(0,z')}{h}=\frac{G_{i}^{*}(h,z')}{h}>z'-\frac{z'-z}{2}=\frac{z+z'}{2}$.
So,
\begin{align*}
\frac{G_{i}^{*}(h,z)}{h}<\frac{z+z'}{2}<\frac{G_{i}^{*}(h,z')}{h}
\end{align*}

Multiplying by $h$ on both sides, we conclude that $G_{i}^{*}(h,z)<G_{i}^{*}(h,z')\;\forall h\in(0,\min(\delta,\delta'))$. 

Let $\alpha\coloneqq\sqrt[d]{\min(\delta,\delta')/\kappa}$. We can break $\int_{0}^{\infty}\left(1-G_{i}^{*}\left(\kappa t^{d/2},z\right)\right)^{m}dt$
into two terms:
\footnotesize
\begin{align*}
 & \int_{0}^{\infty} \left(1-G_{i}^{*}\left(\kappa t^{d/2},z\right)\right)^{m}dt \\
 = & \int_{0}^{\alpha}\left(1-G_{i}^{*}\left(\kappa t^{d/2},z\right)\right)^{m}dt + \int_{\alpha}^{\infty}\left(1-G_{i}^{*}\left(\kappa t^{d/2},z\right)\right)^{m}dt
\end{align*}
\normalsize
We can also do the same for $\int_{0}^{\infty}\left(1-G_{i}^{*}\left(\kappa t^{d/2},z'\right)\right)^{m}dt$. 

Because $G_{i}^{*}(h,z)<G_{i}^{*}(h,z')\;\forall h\in(0,\min(\delta,\delta'))$, $G_{i}^{*}(\kappa t^{d/2},z)<G_{i}^{*}(\kappa t^{d/2},z')\;\forall t\in(0,\alpha)$. It follows that $1-G_{i}^{*}(\kappa t^{d/2},z)>1-G_{i}^{*}(\kappa t^{d/2},z')$ and $\left(1-G_{i}^{*}\left(\kappa t^{d/2},z\right)\right)^{m}>\left(1-G_{i}^{*}(\kappa t^{d/2},z')\right)^{m}\;\forall t\in(0,\alpha)$. So, $\int_{0}^{\alpha}\left(1-G_{i}^{*}\left(\kappa t^{d/2},z\right)\right)^{m}dt>\int_{0}^{\alpha}\left(1-G_{i}^{*}\left(\kappa t^{d/2},z'\right)\right)^{m}dt$. 

We now consider the second term. First, observe that $F_{i}^{\theta}(t)=\mathrm{Pr}\left(\left\Vert \tilde{\mathbf{x}}_{1}^{\theta}-\mathbf{x}_{i}\right\Vert^{2}_{2}\leq t\right)=\mathrm{Pr}\left(\kappa\left\Vert \tilde{\mathbf{x}}_{1}^{\theta}-\mathbf{x}_{i}\right\Vert _{2}^{d}\leq\kappa t^{d/2}\right)=G_{i}^{\theta}\left(\kappa t^{d/2}\right)$ for all $t\geq0$. So, by the property of $P_{\theta}$, for any $\theta_{1},\theta_{2}$, there exists $\theta_{0}$ such that $G_{i}^{\theta_{0}}(\kappa t^{d/2})=F_{i}^{\theta_{0}}(t)\geq\max\left\{ F_{i}^{\theta_{1}}(t),F_{i}^{\theta_{2}}(t)\right\} =\max\left\{ G_{i}^{\theta_{1}}(\kappa t^{d/2}),G_{i}^{\theta_{2}}(\kappa t^{d/2})\right\} \;\forall t\geq0$ and $\partial_{+}G_{i}^{\theta_{0}}(0)=p_{\theta_{0}}(\mathbf{x}_{i})=\max\left\{ p_{\theta_{1}}(\mathbf{x}_{i}),p_{\theta_{2}}(\mathbf{x}_{i})\right\} =\max\left\{ \partial_{+}G_{i}^{\theta_{1}}(0),\partial_{+}G_{i}^{\theta_{2}}(0)\right\} $. 

Take $\theta_{1}=\phi_{i}(z)$ and $\theta_{2}=\phi_{i}(z')$. Let $\theta_{0}$ be such that $G_{i}^{\theta_{0}}(\kappa t^{d/2})\geq\max\left\{ G_{i}^{\theta_{1}}(\kappa t^{d/2}),G_{i}^{\theta_{2}}(\kappa t^{d/2})\right\} \;\forall t\geq0$ and $\partial_{+}G_{i}^{\theta_{0}}(0)=\max\left\{ \partial_{+}G_{i}^{\theta_{1}}(0),\partial_{+}G_{i}^{\theta_{2}}(0)\right\}$.
By definition of $\phi_{i}(\cdot)$, $\partial_{+}G_{i}^{\theta_{1}}(0)=z$ and $\partial_{+}G_{i}^{\theta_{2}}(0)=z'$. So, $\partial_{+}G_{i}^{\theta_{0}}(0)=\max\left\{ z,z'\right\} =z'$. Since $G_{i}^{\theta_{0}}(\kappa t^{d/2})\geq G_{i}^{\theta_{2}}(\kappa t^{d/2})\;\forall t\geq0$, $1-G_{i}^{\theta_{0}}\left(\kappa t^{d/2}\right)\leq1-G_{i}^{\theta_{2}}\left(\kappa t^{d/2}\right)\;\forall t\geq0$ and so $\int_{0}^{\infty}\left(1-G_{i}^{\theta_{0}}\left(\kappa t^{d/2}\right)\right)^{m}dt\leq\int_{0}^{\infty}\left(1-G_{i}^{\theta_{2}}\left(\kappa t^{d/2}\right)\right)^{m}dt$.
On the other hand, because $\theta_{2}=\phi_{i}(z')$ minimizes $\int_{0}^{\infty}\left(1-G_{i}^{\theta}\left(\kappa t^{d/2}\right)\right)^{m}dt$ among all $\theta$'s such that $\partial_{+}G_{i}^{\theta}(0)=z'$ and $\partial_{+}G_{i}^{\theta_{0}}(0)=z'$, $\int_{0}^{\infty}\left(1-G_{i}^{\theta_{2}}\left(\kappa t^{d/2}\right)\right)^{m}dt\leq\int_{0}^{\infty}\left(1-G_{i}^{\theta_{0}}\left(\kappa t^{d/2}\right)\right)^{m}dt$.
We can therefore conclude that $\int_{0}^{\infty}\left(1-G_{i}^{\theta_{0}}\left(\kappa t^{d/2}\right)\right)^{m}dt=\int_{0}^{\infty}\left(1-G_{i}^{\theta_{2}}\left(\kappa t^{d/2}\right)\right)^{m}dt$. Since $1-G_{i}^{\theta_{0}}\left(\kappa t^{d/2}\right)\leq1-G_{i}^{\theta_{2}}\left(\kappa t^{d/2}\right)\;\forall t\geq0$, the only situation where this can happen is when $G_{i}^{\theta_{0}}\left(\kappa t^{d/2}\right)=G_{i}^{\theta_{2}}\left(\kappa t^{d/2}\right)\;\forall t\geq0$. 

By definition of $G_{i}^{*}$, $G_{i}^{*}\left(\kappa t^{d/2},z\right)=G_{i}^{\phi_{i}(z)}(\kappa t^{d/2})=G_{i}^{\theta_{1}}(\kappa t^{d/2})$ and $G_{i}^{*}\left(\kappa t^{d/2},z'\right)=G_{i}^{\phi_{i}(z')}(\kappa t^{d/2})=G_{i}^{\theta_{2}}(\kappa t^{d/2})=G_{i}^{\theta_{0}}\left(\kappa t^{d/2}\right)$. By definition of $\theta_{0}$, $G_{i}^{\theta_{0}}\left(\kappa t^{d/2}\right)\geq G_{i}^{\theta_{1}}(\kappa t^{d/2})\;\forall t\geq0$. So, $G_{i}^{*}\left(\kappa t^{d/2},z'\right)=G_{i}^{\theta_{2}}(\kappa t^{d/2})\geq G_{i}^{\theta_{1}}(\kappa t^{d/2})=G_{i}^{*}\left(\kappa t^{d/2},z\right)\;\forall t\geq0$. Hence, $\int_{\alpha}^{\infty}\left(1-G_{i}^{*}\left(\kappa t^{d/2},z'\right)\right)^{m}dt\leq\int_{\alpha}^{\infty}\left(1-G_{i}^{*}\left(\kappa t^{d/2},z\right)\right)^{m}dt$. 

Combining with the previous result that $\int_{0}^{\alpha}\left(1-G_{i}^{*}\left(\kappa t^{d/2},z'\right)\right)^{m}dt<\int_{0}^{\alpha}\left(1-G_{i}^{*}\left(\kappa t^{d/2},z\right)\right)^{m}dt$, it follows that:
\footnotesize
\begin{align*}
& \int_{0}^{\infty}\left(1-G_{i}^{*}\left(\kappa t^{d/2},z'\right)\right)^{m}dt\\
 = & \int_{0}^{\alpha}\left(1-G_{i}^{*}\left(\kappa t^{d/2},z'\right)\right)^{m}dt+\int_{\alpha}^{\infty}\left(1-G_{i}^{*}\left(\kappa t^{d/2},z'\right)\right)^{m}dt \\
 < & \int_{0}^{\alpha}\left(1-G_{i}^{*}\left(\kappa t^{d/2},z\right)\right)^{m}dt+\int_{\alpha}^{\infty}\left(1-G_{i}^{*}\left(\kappa t^{d/2},z'\right)\right)^{m}dt
\end{align*}
\begin{align*}
 \leq & \int_{0}^{\alpha}\left(1-G_{i}^{*}\left(\kappa t^{d/2},z\right)\right)^{m}dt+\int_{\alpha}^{\infty}\left(1-G_{i}^{*}\left(\kappa t^{d/2},z\right)\right)^{m}dt \\
 = & \int_{0}^{\infty}\left(1-G_{i}^{*}\left(\kappa t^{d/2},z\right)\right)^{m}dt
\end{align*}
\normalsize
By definition, 
\begin{align*}
& \int_{0}^{\infty}\left(1-G_{i}^{*}\left(\kappa t^{d/2},z\right)\right)^{m}dt \\
 = & \int_{0}^{\infty}\left(1-G_{i}^{\phi_{i}(z)}(\kappa t^{d/2})\right)^{m}dt \\
 = & \;\min_{\theta}\left\{ \int_{0}^{\infty}\left(1-G_{i}^{\theta}\left(\kappa t^{d/2}\right)\right)^{m}dt\left|\partial_{+}G_{i}^{\theta}(0)=z\right.\right\} \\
 = & \;\min_{\theta}\left\{ \mathbb{E}\left[R_{i}^{\theta}\right]\left|p_{\theta}(\mathbf{x}_{i})=z\right.\right\} \\
 = & \;\Psi_{i}(z)
\end{align*}
Similarly, $\int_{0}^{\infty}\left(1-G_{i}^{*}\left(\kappa t^{d/2},z'\right)\right)^{m}dt=\Psi_{i}(z')$. We can therefore conclude that $\Psi_{i}(z') < \Psi_{i}(z)$ whenever $z'>z$. 
\end{proof}

We now prove the main result. 

\begin{customthm}{1}
Consider a set of observations $\mathbf{x}_{1},\ldots,\mathbf{x}_{n}$, a parameterized family of distributions $P_{\theta}$ with probability
density function (PDF) $p_{\theta}(\cdot)$ and a unique maximum likelihood solution $\theta^{*}$. For any $m\geq1$, let $\tilde{\mathbf{x}}_{1}^{\theta},\ldots,\tilde{\mathbf{x}}_{m}^{\theta}\sim P_{\theta}$ be i.i.d. random variables and define $\tilde{r}^{\theta}\coloneqq\left\Vert \tilde{\mathbf{x}}_{1}^{\theta}\right\Vert^{2}_{2}$, $R^{\theta}\coloneqq\min_{j\in[m]}\left\Vert \tilde{\mathbf{x}}_{j}^{\theta}\right\Vert^{2}_{2}$ and $R_{i}^{\theta}\coloneqq\min_{j\in[m]}\left\Vert \tilde{\mathbf{x}}_{j}^{\theta}-\mathbf{x}_{i}\right\Vert^{2}_{2}$. 
Let $F^{\theta}(\cdot)$ be the cumulative distribution function (CDF) of $\tilde{r}^{\theta}$ and $\Psi(z)\coloneqq\min_{\theta}\left\{ \mathbb{E}\left[R^{\theta}\right]\left|p_{\theta}(\mathbf{0})=z\right.\right\} $.

If $P_{\theta}$ satisfies the following:
\begin{itemize}
\item $p_{\theta}(\mathbf{x})$ is differentiable w.r.t. $\theta$ and continuous w.r.t. $\mathbf{x}$ everywhere.
\item $\forall \theta,\mathbf{v}$, there exists $\theta'$ such that $p_{\theta}(\mathbf{x})=p_{\theta'}(\mathbf{x}+\mathbf{v})\;\forall\mathbf{x}$. 
\item For any $\theta_{1},\theta_{2}$, there exists $\theta_{0}$ such that $F^{\theta_{0}}(t)\geq\max\left\{ F^{\theta_{1}}(t),F^{\theta_{2}}(t)\right\} \;\forall t\geq0$ and $p_{\theta_{0}}(\mathbf{0})=\max\left\{ p_{\theta_{1}}(\mathbf{0}),p_{\theta_{2}}(\mathbf{0})\right\} $. 
\item $\exists\tau>0$ such that $\forall i\in[n]\;\forall\theta\notin B_{\theta^{*}}(\tau)$, $p_{\theta}(\mathbf{x}_{i})<p_{\theta*}(\mathbf{x}_{i})$, where $B_{\theta^{*}}(\tau)$
denotes the ball centred at $\theta^{*}$ of radius $\tau$. 
\item $\Psi(z)$ is differentiable everywhere. 
\item For all $\theta$, if $\theta\neq\theta^{*}$, there exists $j\in[d]$ such that $\left\langle \left(\begin{array}{c}
\frac{\Psi'(p_{\theta}(\mathbf{x}_{1}))p_{\theta}(\mathbf{x}_{1})}{\Psi'(p_{\theta^{*}}(\mathbf{x}_{1}))p_{\theta^{*}}(\mathbf{x}_{1})}\\
\vdots\\
\frac{\Psi'(p_{\theta}(\mathbf{x}_{n}))p_{\theta}(\mathbf{x}_{n})}{\Psi'(p_{\theta^{*}}(\mathbf{x}_{n}))p_{\theta^{*}}(\mathbf{x}_{n})}
\end{array}\right),\left(\begin{array}{c}
\nabla_{\theta}\left(\log p_{\theta}(\mathbf{x}_{1})\right)_{j}\\
\vdots\\
\nabla_{\theta}\left(\log p_{\theta}(\mathbf{x}_{n})\right)_{j}
\end{array}\right)\right\rangle \neq 0$. 
\end{itemize}
Then,
\[
\arg\min_{\theta}\sum_{i=1}^{n}\frac{\mathbb{E}\left[R_{i}^{\theta}\right]}{\Psi'(p_{\theta^{*}}(\mathbf{x}_{i}))p_{\theta^{*}}(\mathbf{x}_{i})}=\arg\max_{\theta}\sum_{i=1}^{n}\log p_{\theta}(\mathbf{x}_{i})
\]
Furthermore, if $p_{\theta^{*}}(\mathbf{x}_{1})=\cdots = p_{\theta^{*}}(\mathbf{x}_{n})$, then,
\[
\arg\min_{\theta}\sum_{i=1}^{n}\mathbb{E}\left[R_{i}^{\theta}\right]=\arg\max_{\theta}\sum_{i=1}^{n}\log p_{\theta}(\mathbf{x}_{i})
\]
\end{customthm}

\begin{proof}

Pick an arbitrary $i\in[n]$. We first prove a few basic facts. 

By the second property of $P_{\theta}$, $\forall\theta\;\exists\theta'$ such that $p_{\theta}(\mathbf{u})=p_{\theta'}(\mathbf{u}-\mathbf{x}_{i})\;\forall\mathbf{u}$. In particular, $p_{\theta}(\mathbf{x}_{i})=p_{\theta'}(\mathbf{x}_{i}-\mathbf{x}_{i})=p_{\theta'}(\mathbf{0})$. Let $F_{i}^{\theta}$ be as defined in Lemma~\ref{lem:density_to_dist_transform}. 
\begin{align*}
F_{i}^{\theta}(t) = & \;\mathrm{Pr}\left(\tilde{r}_{i}^{\theta}\leq t\right) = \mathrm{Pr}\left(\left\Vert \tilde{\mathbf{x}}_{1}^{\theta}-\mathbf{x}_{i}\right\Vert_{2}\leq \sqrt{t}\right) \\
= & \int_{B_{\mathbf{x}_{i}}(\sqrt{t})}p_{\theta}(\mathbf{u})d\mathbf{u} = \int_{B_{\mathbf{x}_{i}}(\sqrt{t})}p_{\theta'}(\mathbf{u}-\mathbf{x}_{i})d\mathbf{u} \\
= & \int_{B_{\mathbf{0}}(\sqrt{t})}p_{\theta'}(\mathbf{u})d\mathbf{u}=\mathrm{Pr}\left(\tilde{r}^{\theta'}\leq t\right)=F^{\theta'}(t)
\end{align*}
Similarly, $\forall\theta'\;\exists\theta$ such that $p_{\theta'}(\mathbf{u})=p_{\theta}(\mathbf{u}+\mathbf{x}_{i})\;\forall\mathbf{u}$. In particular, $p_{\theta'}(\mathbf{0})=p_{\theta}(\mathbf{0}+\mathbf{x}_{i})=p_{\theta}(\mathbf{x}_{i})$. 
\begin{align*}
F^{\theta'}(t) = & \;\mathrm{Pr}\left(\tilde{r}^{\theta'}\leq t\right) = \int_{B_{\mathbf{0}}(\sqrt{t})}p_{\theta'}(\mathbf{u})d\mathbf{u} \\
 = & \int_{B_{\mathbf{0}}(\sqrt{t})}p_{\theta}(\mathbf{u}+\mathbf{x}_{i})d\mathbf{u} = \int_{B_{\mathbf{x}_{i}}(\sqrt{t})}p_{\theta}(\mathbf{u})d\mathbf{u} \\
 = & \mathrm{Pr}\left(\left\Vert \tilde{\mathbf{x}}_{1}^{\theta}-\mathbf{x}_{i}\right\Vert _{2}\leq \sqrt{t}\right) = \mathrm{Pr}\left(\tilde{r}_{i}^{\theta}\leq t\right)=F_{i}^{\theta}(t)
\end{align*}
Let $\theta_{1},\theta_{2}$ be arbitrary. The facts above imply that there exist $\theta_{1}'$ and $\theta_{2}'$ such that $F_{i}^{\theta_{1}}(t)=F^{\theta_{1}'}(t)$, $F_{i}^{\theta_{2}}(t)=F^{\theta_{2}'}(t)$, $p_{\theta_{1}}(\mathbf{x}_{i})=p_{\theta_{1}'}(\mathbf{0})$ and $p_{\theta_{2}}(\mathbf{x}_{i})=p_{\theta_{2}'}(\mathbf{0})$. 

By the third property of $P_{\theta}$, let $\theta_{0}'$ be such that $F^{\theta_{0}'}(t)\geq\max\left\{ F^{\theta_{1}'}(t),F^{\theta_{2}'}(t)\right\} \;\forall t\geq0$
and $p_{\theta_{0}'}(\mathbf{0})=\max\left\{ p_{\theta_{1}'}(\mathbf{0}),p_{\theta_{2}'}(\mathbf{0})\right\} $. By the facts above, it follows that there exists $\theta_{0}$ such
that $F^{\theta_{0}'}(t)=F_{i}^{\theta_{0}}(t)$ and $p_{\theta_{0}'}(\mathbf{0})=p_{\theta_{0}}(\mathbf{x}_{i})$. 

So, we can conclude that for any $\theta_{1},\theta_{2}$, there exists $\theta_{0}$ such that $F_{i}^{\theta_{0}}(t)\geq\max\left\{ F_{i}^{\theta_{1}}(t),F_{i}^{\theta_{2}}(t)\right\} \;\forall t\geq0$ and $p_{\theta_{0}}(\mathbf{x}_{i})=\max\left\{ p_{\theta_{1}}(\mathbf{x}_{i}),p_{\theta_{2}}(\mathbf{x}_{i})\right\} $. 

By Lemma~\ref{lem:density_to_dist_transform}, $\Psi_{i}(z)=\min_{\theta}\left\{ \mathbb{E}\left[R_{i}^{\theta}\right]\left|p_{\theta}(\mathbf{x}_{i})=z\right.\right\} $ is strictly decreasing. 

Consider any $\theta$. By the facts above, there exists $\theta'$ such that $p_{\theta}(\mathbf{x}_{i})=p_{\theta'}(\mathbf{0})$ and $F_{i}^{\theta}(t)=F^{\theta'}(t)\;\forall t$. Therefore,
\begin{align*}
\mathbb{E}\left[R_{i}^{\theta}\right] = & \int_{0}^{\infty}\mathrm{Pr}\left(R_{i}^{\theta}>t\right)dt \\
= & \int_{0}^{\infty}\left(\mathrm{Pr}\left(\left\Vert \tilde{\mathbf{x}}_{1}^{\theta}-\mathbf{x}_{i}\right\Vert^{2}_{2}>t\right)\right)^{m}dt \\ 
= & \int_{0}^{\infty}\left(1-F_{i}^{\theta}(t)\right)^{m}dt \\
= & \int_{0}^{\infty}\left(1-F^{\theta'}(t)\right)^{m}dt \\
= & \int_{0}^{\infty}\mathrm{Pr}\left(R^{\theta'}>t\right)dt \\
= & \mathbb{E}\left[R^{\theta'}\right]
\end{align*}
So, $\forall z$
\begin{align*}
\Psi_{i}(z) = & \min_{\theta}\left\{ \mathbb{E}\left[R_{i}^{\theta}\right]\left|p_{\theta}(\mathbf{x}_{i})=z\right.\right\} \\
 = & \min_{\theta'}\left\{ \mathbb{E}\left[R^{\theta'}\right]\left|p_{\theta'}(\mathbf{0})=z\right.\right\} \\
 = & \Psi(z)
\end{align*}
Because $\Psi_{i}(\cdot)$ is strictly decreasing, $\Psi(\cdot)$ is also strictly decreasing. 

We would like to apply Lemma~\ref{lem:monotonic_transform}, with $f_{i}(\theta)=-\log p_{\theta}(\mathbf{x}_{i})\;\forall i\in[n]$ and $\Phi(y)=\Psi(\exp(-y))$. By the first property of $P_{\theta}$, $p_{\theta}(\cdot)$ is differentiable w.r.t. $\theta$ and so $f_{i}(\theta)$ is differentiable for all $i$. By the fifth property of $P_{\theta}$,
$\Psi(\cdot)$ is differentiable and so $\Phi(\cdot)$ is differentiable. Since $y\mapsto\exp(-y)$ is strictly decreasing and $\Psi(\cdot)$ is strictly decreasing, $\Phi(\cdot)$ is strictly increasing. Since there is a unique maximum likelihood solution $\theta^{*}$, $\min_{\theta}\sum_{i=1}^{n}f_{i}(\theta)=\max_{\theta}\sum_{i=1}^{n}\log p_{\theta}(\mathbf{x}_{i})$ exists and has a unique minimizer. By the fourth property of $P_{\theta}$, the first condition of Lemma~\ref{lem:monotonic_transform} is satisfied. By the sixth property of $P_{\theta}$, the second condition of Lemma~\ref{lem:monotonic_transform}
is satisfied. Since all conditions are satisfied, we apply Lemma~\ref{lem:monotonic_transform} and conclude that
\begin{align*}
\min_{\theta}\sum_{i=1}^{n}w_{i}\Phi(f_{i}(\theta))= & \min_{\theta}\sum_{i=1}^{n}w_{i}\Psi(p_{\theta}(\mathbf{x}_{i})) \\
= & \min_{\theta}\sum_{i=1}^{n}w_{i}\Psi_{i}(p_{\theta}(\mathbf{x}_{i})) \\
= & \min_{\theta}\sum_{i=1}^{n}\frac{\mathbb{E}\left[R_{i}^{\theta}\right]}{\Psi'(p_{\theta^{*}}(\mathbf{x}_{i}))p_{\theta^{*}}(\mathbf{x}_{i})}
\end{align*}
exists and has a unique minimizer. Furthermore, 
\begin{align*}
\arg\min_{\theta}\sum_{i=1}^{n}\frac{\mathbb{E}\left[R_{i}^{\theta}\right]}{\Psi'(p_{\theta^{*}}(\mathbf{x}_{i}))p_{\theta^{*}}(\mathbf{x}_{i})} = & \arg\min_{\theta}\sum_{i=1}^{n}-\log p_{\theta}(\mathbf{x}_{i}) \\ 
= &\arg\max_{\theta}\sum_{i=1}^{n}\log p_{\theta}(\mathbf{x}_{i})
\end{align*}
If $p_{\theta}(\mathbf{x}_{1})=\cdots p_{\theta}(\mathbf{x}_{n})$, then $w_{1}=\cdots=w_{n}$, and so $\arg\min_{\theta}\sum_{i=1}^{n}w_{i}\mathbb{E}\left[R_{i}^{\theta}\right]=\arg\min_{\theta}\sum_{i=1}^{n}\mathbb{E}\left[R_{i}^{\theta}\right]=\arg\max_{\theta}\sum_{i=1}^{n}\log p_{\theta}(\mathbf{x}_{i})$. 
\end{proof}

\end{document}